\newtheorem{definition}{Definition}
\newtheorem{proposition}{Proposition}
\newtheorem{theorem}{Theorem}
\newtheorem{lemma}{Lemma}
\newtheorem{corollary}{Corollary}
\newtheorem{assumption}{Assumption}
\title{Temporal Difference Learning as Gradient Splitting}
\author{
 Rui Liu \\
  Division of Systems Engineering\\
  Boston University\\
  Boston, MA, 02215 \\
  \texttt{rliu@bu.edu} \\
  
   \And
 Alex Olshevsky \\
  Department of ECE and Division of Systems Engineering\\
  Boston University\\
  Boston, MA, 02215 \\
  \texttt{alexols@bu.edu} \\
}
\begin{document}
\maketitle
\begin{abstract}
Temporal difference learning with linear function approximation is a popular method to obtain a low-dimensional approximation of the value function of a  policy in a Markov Decision Process. We give a new interpretation of this method in terms of a splitting of the gradient of an appropriately chosen function. As a consequence of this interpretation, convergence proofs for gradient descent can be applied almost verbatim to temporal difference learning. Beyond giving a new, fuller explanation of why temporal difference works, our interpretation also yields improved convergence times. We consider the setting with $1/\sqrt{T}$ step-size, where   previous comparable finite-time convergence time bounds for temporal difference learning had the multiplicative factor $1/(1-\gamma)$ in front of the bound, with $\gamma$ being the discount factor.  We show that a minor variation on TD learning which estimates the mean of the value function separately has a convergence time where  $1/(1-\gamma)$ only multiplies an asymptotically negligible term.
\end{abstract}

% keywords can be removed
%\keywords{First keyword \and Second keyword \and More}

\section{Introduction}
Reinforcement learning is a basic machine learning paradigm which concerns learning optimal policies in Markov Decision Processes (MDP). It has been applied to many challenging practical problems, such as, autonomous driving \citep{chen2015deepdriving}, robotics \citep{gu2017deep}, bidding and advertising\citep{jin2018real}, and games \citep{silver2016mastering}. An important problem in reinforcement learning  is to estimate the value function for a given  policy, often referred to as the policy evaluation problem. Temporal difference (TD) learning originally proposed by \citet{sutton1988learning} is one of the most widely used policy evaluation algorithms. %TD is an approach to learning how to predict a quantity that depends on future values of a given signal. 
TD uses differences in predictions over successive time steps to drive the learning process, with  the prediction at any given time step updated via a carefully chosen step-size to bring it closer to the prediction of the same quantity at the next time step.

Despite its simple implementation, theoretical analysis of TD can be involved. This is particularly true when TD methods are applied to problems with large  state-spaces by maintaining an approximation to the value function.  Precise conditions for the asymptotic convergence of TD with linear function approximation were established by viewing TD as a stochastic approximation for solving a suitable Bellman equation in \citep{tsitsiklis1997analysis}. Before the last few years, there have been few non-asymptotic analyses of TD methods. The first non-asymptotic bounds for TD(0) with linear function approximation were given by \citet{korda2015td}, obtaining an exponential convergence rate for the centered variant of TD(0) when the underlying Markov chain mixes fast. However, some issues with the proofs of \citet{korda2015td} were listed by the subsequent work of \citet{narayanan2017finite}.  

 In \citet{lakshminarayanan2018linear} it was shown that TD algorithms with a problem independent constant step size and iterate averaging, achieve a problem dependent error that decays as $O(1/t)$ with the number of iterations $t$.  Convergence rates in probability with an $O(1/t)$ step-size were provided by  \citet{dalal2017finite}. Both analyses of \citet{dalal2017finite} and \citet{lakshminarayanan2018linear} assume samples used by the algorithm are i.i.d. rather than a trajectory in Markov chain. % but the difficulty of analyse TD(0) algorithm comes from the Markov sampling which leads to biased and not independent gradient samples. 
For the Markov chain observation model, \citet{bhandari2018finite} provide a $O(1 / \sqrt{T})$ convergence rate with step-size that scales as $1/\sqrt{T}$ and $O((\log t)/t)$ convergence rate with step size $O(1/t)$ for projected TD algorithm. The constant factors in the latter bounds depend on  $1/(1-\gamma)$, where $\gamma$ is the discount factor; this scaling is one of the things we will be studying in this paper. 

A number of papers also work on algorithms related to and inspired by the classic TD algorithm in the setting with Markovian sampling. \citet{srikant2019finite} give finite-time bounds for the TD algorithms with linear function approximation and a constant step-size. % without requiring a projection step. 
The two time-scale TD with gradient correction algorithm under a Markovian sampling and linear function approximation are discussed by \citet{xu2019two} and shown to converge as fast as $O((\log t)/t^{2/3})$. A method called TD-AMSGrad under linear function approximation is studied by \citet{xiong2020non};  with a constant step size, TD-AMSGrad converges to a neighborhood of the global optimum at a rate of $O(1/t)$, and with a diminishing step size, it converges exactly to the global optimum at a rate of $O((\log t)/t)$.

In this paper, we will study the convergence of with linear function approximation under Markov  observations. Our main contribution is to provide a new interpretation of temporal difference learning: we show how to view it as a ``splitting'' (a term we introduce and define later) of an appropriately chosen quadratic form.  As a consequence of this interpretation, it is possible to apply convergence proofs for gradient descent almost verbatim to temporal difference learning. 

The convergence times bounds we  obtain this way improve on existing results. In particular, we study step-sizes of $1/\sqrt{T}$, which are typically recommended because the resulting error bounds do not depend on the inverse eigenvalues of the matrices involved in the linear approximation, which can be quite large; by contrast, methods that achieve faster than $O(1/\sqrt{T})$ decay have performance guarantees that scale with these same eigenvalues. We provide a minor variation on TD(0)  for which we obtain a convergence rate that scales as $\left[ (1/(1-\gamma))^2 \right]/T + O(1/\sqrt{T})$, with the constant in the $O(\cdot)$ notation not blowing up as $\gamma \rightarrow 1$. We will also explain why a factor of $1/(1-\gamma)^2$ multiplying the asympotically negligible $O(1/T)$ term as here is unavoidable. %These results are an improvement over state-of-the-art bounds in this setting \cite{bhandari2018finite},  where the entire bound was multiplied by $1/(1-\gamma)$.  

%This paper is organized as follows. In Section 2, we provide a precise definition and available results of TD algorithm. Section 3 contains our new identity for TD. Section 4 presents bounds for value function for projected TD and general TD, where we also develop the mean-adjusted TD and its convergence result in Section 4. In section 5, we extend our analysis to TD with eligibility traces.

\section{PRELIMINARIES}

In this section, we describe the basics of MDPs and TD learning methods. While all this material is standard and available in textbooks (e.g., \cite{sutton2018reinforcement}), it is necessary to standardize notation and make our presentation self-contained. 

\subsection{Markov Decision Processes}
%We briefly review the problem of policy evaluation for a given stationary policy $\mu$ over a MDP.
We consider a discounted reward MDP described by a 5-tuple $(\mathcal{S},\mathcal{A},\mathcal{P},r,\gamma)$, where $\mathcal{S}=[n]=\{1,2,\cdots,n\}$ is the finite state space, $\mathcal{A}$ is the finite action space, $\mathcal{P}=(\mathcal{P}(s'|s,a))_{s,s' \in \mathcal{S}, a \in \mathcal{A}}$ are the transition probabilities, $r=(r(s,a,s'))_{s,s' \in \mathcal{S}, a \in \mathcal{A}}$ are deterministic rewards and $\gamma \in (0,1)$ is the discount factor. The (stationary) policy to be evaluated is a mapping $\mu$: $\mathcal{S} \times \mathcal{A} \rightarrow [0,1]$, where $\mu(s,a)$ are the probabilities to select action $a$ when in state $s$ and $\sum_{a \in \mathcal{A}} \mu(s,a) =1$ for all states $s \in \mathcal{S}$.  We adopt the shorthand $s_t$ for the state at step $t$, $a_t$ for the action taken at step $t$, and  $r_{t+1}=r(s_t,a_t,s_{t+1})$. 
%Hence for $t \geq 0$, we have $a_t \sim \mu(s_{t},\cdot)$, $s_{t+1} \sim \mathcal{P}(\cdot|s_t,a_t)$ and $r_{t+1}=r(s_t,a_t,s_{t+1})$.

%Given an initial state $s_0$, denote $E_{\mu,s_0}$ the expectation under a random sequence $(a_0,s_1,a_1,...)$ of actions and states resulting from policy $\mu$ and $\mathcal{P}(s'|s,a)$. . 
The value function of the policy $\mu$, denoted $V^{\mu}: \mathcal{S} \rightarrow \mathbb{R}$ is defined as  $$V^{\mu}(s)=E_{\mu,s}\left[\sum_{t=0}^{\infty} \gamma^tr_{t+1}\right],$$ where $E_{\mu,s} \left[ \cdot \right]$ indicates that $s$ is the initial state and the actions are chosen according to $\mu$. 

The immediate reward vector $R^{\mu} : \mathcal{S} \rightarrow \mathbb{R}$ is defined as $$R^{\mu}(s)=E_{\mu,s}(r_1)=\sum_{s' \in \mathcal{S}} \sum_{a \in \mathcal{A}} \mu(s,a)\mathcal{P}(s'|s,a)r(s,a,s').$$ For the remainder of the paper, we will be fixing the policy $\mu$; consequently, we can talk about the probability transition matrix $P^{\mu}$ defined as  $$P^{\mu}(s,s')=\sum_{a \in \mathcal{A}} \mu(s,a)\mathcal{P}(s'|s,a).$$

In the following, we will treat $V^{\mu}$ and $R^{\mu}$ as vectors in $\mathbb{R}^n$, and treat $P^{\mu}$ as a matrix in $\mathbb{R}^{n \times n}$. It is well-known that $V^{\mu}$ satisfies the so-called Bellman equation \citep{sutton2018reinforcement}: %$V^{\mu}=R^{\mu}+\gamma P^{\mu} V^{\mu}.$ 
defining the Bellman operator  $T^{\mu} : \mathbb{R}^n \rightarrow \mathbb{R}^n$ as
\begin{equation}\label{eq:operator}
     (T^{\mu}V^{\mu})(s) = \sum_{s'=1}^n P^{\mu}(s,s')(r(s,s')+\gamma V^{\mu}(s'))
\end{equation}for $s \in [n]$, where $r(s,s')=\sum_{a \in \mathcal{A}} \mu(s,a)r(s,a,s')$, we can then write Bellman equation as $T^{\mu}V^{\mu} = V^{\mu}$ \citep{sutton2018reinforcement}. 

Next, we state some standard assumptions from the literature. The first assumption is  on the underlying Markov chain.
\begin{assumption} \label{ass:mc}
 The Markov chain whose transition matrix is the matrix $P^{\mu}$ is irreducible and aperiodic. 
\end{assumption}
Following this assumption, the Markov decision process induced by the policy $\mu$ is ergodic with a unique stationary distribution $\pi  = (\pi_1, \pi_2, \cdots, \pi_n)$, a row vector whose entries are non-negative and sum to $1$. It also holds that $\pi_{s'} = \lim_{t \rightarrow \infty} (P^{\mu})^t(s,s')$ for any two states $s,s' \in [n]$. {\em Note that we are using $\pi$ to denote the stationary distribution of $P^{\mu}$, and not the policy (which is denoted by $\mu$). }

%The next assumption ensures that transition rewards are bounded uniformly.

We will use the notation $r_{\max}$ to denote an upper bound on the rewards; more formally, $r_{\rm max}$ is a real number such that  $$|r(s,a,s')| \leq r_{{\rm max}} \mbox{ for all } s,s' \in [n], a \in \mathcal{A}.$$ Since the number of actions and states is finite, such an $r_{\rm max}$ always exists. 

We next introduce some notation that will make our analysis  more concise. For a symmetric positive definite matrix $A \in \mathbb{R}^{n \times n}$, we define the inner product $\langle x, y \rangle_{A} = x^T A y$ and the associated norm $\|x\|_A = \sqrt{x^T A x}$. Let $D = {\rm diag}(\pi_1,\cdots,\pi_n)$ denote the diagonal matrix whose elements are given by the entries of the stationary distribution $\pi$.  Given value functions $V$ and $V'$ on the state space $\mathcal{S}$, we have $$\left \langle V,V' \right \rangle _{D} = V^T D V'=\sum_{s \in \mathcal{S}}\pi_s V(s) V'(s),$$and the associated norm$$\|V\|_{D}^2=V^T D V = \sum_{s \in \mathcal{S}} \pi_s V(s)^2.$$

Finally, we define the Dirichlet seminorm, which is often called the Dirichlet form in the Markov chain literature \citep{diaconis1996logarithmic}; we follow here the notation of \cite{ollivier2018approximate}. The Dirichlet seminorm depends both on the transition matrix $P^{\mu}$ and the invariant measure $\pi$:
\begin{equation}\label{eq:def_Dir}
\|V\|_{{\rm Dir}}^2 = \frac{1}{2} \sum_{s,s' \in \mathcal{S} } \pi_s P^{\mu}(s,s') (V(s')-V(s))^2.
\end{equation}
It  is easy to see that, as a consequence of Assumption \ref{ass:mc},  $\|V\|_{{\rm Dir}}=0$ if and only if $V$ is a multiple of the all-ones vector. 
%We will find it convenient to define the matrix $L_{\rm Dir}$ as a matrix satisfying 
%\begin{equation} \label{eq:ldir} ||V||_{\rm Dir}^2 = V^T L_{\rm Dir}  V. \end{equation} 

Similarly, we introduce the $k$-step Dirichlet seminorm, defined as 
 $$\|V\|_{{\rm Dir},k}^2 = \frac{1}{2} \sum_{s,s' \in \mathcal{S}} \pi_s (P^{\mu})^k(s,s')(V(s')-V(s))^2.$$

%By elementary computations, the Dirichlet norm satisfies \citep{diaconis1996logarithmic,levin2017markov}:
%\begin{equation} \label{eq:pro_Dir}
 %    \|V\|_{{\rm Dir}}^2=\left \langle (I-P)V,V \right \rangle _{D}.
%\end{equation}
\subsection{Policy Evaluation, Temporal Difference Learning, and Linear Function Approximation}
Policy evaluation refers to the problem of estimating the value function $V^{\mu}$ for a given stationary policy $\mu$. If the size of the state space is large, computing $V^{\mu}(s)$ for all states $s$ may be prohibitively expensive.   A standard remedy is to use low dimensional approximation $V^{\mu}_{\theta}$ of $V^{\mu}$ in the classical TD algorithm as in \citet{sutton1988learning, sutton2018reinforcement}. For brevity, we omit the superscript $\mu$ throughout from now on. 

The classical TD($0$) algorithm with function approximation $V_{\theta}$ starts with an arbitrary value of the parameters $\theta_0$; upon observing the $t^{{\rm th}}$ transition $s_t \rightarrow s'_t$, it computes the scalar-valued temporal-difference error, $$\delta_t=r(s_t,a_t,s'_t)+\gamma V_{\theta_t}(s'_{t})-V_{\theta_t}(s_t),$$ and updates the parameter vector as 
\begin{equation}\label{eq:td}
    \theta_{t+1} = \theta_t + \alpha_t \delta_t \triangledown V_{\theta_t}(s_t).
\end{equation} Here  $\triangledown V_{\theta_t}(s_t)$ denotes the gradient of the function  $V_{\theta}(s_t)$ w.r.t to $\theta$ evaluated at $\theta=\theta_t$, and $\alpha_t$ is the  step size. Intuitively, updating in the direction $\delta_t \triangledown V_{\theta_t}(s_t)$ moves $V_{\theta_t}(s_t)$ closer to the bootstrapped value of $r(s_t,a_t,s'_t)+\gamma V_{\theta_t}(s'_{t})$.% ignoring the fact that $V_{\theta_t}(s'_{t})$ will change as well.

We will be considering the TD($0$) algorithm with a linear function approximation $V_{\theta}$  defined as
\begin{equation}\label{eq:V_theta}
    V_{\theta}(s)=\sum_{l=1}^K \theta_l \phi_l(s) \quad \forall s \in \mathcal{S},
\end{equation} for a given set of $K$ feature vectors $\phi_l: \mathcal{S} \rightarrow \mathbb{R}$, $l \in [K]$. For each state $s$, we will define the vector  $\phi(s)$ which stacks up the features of $s$ as $\phi(s) = (\phi_1(s), \phi_1(s), \cdots, \phi_K(s))^T \in \mathbb{R}^K$. % $\phi_l$ be a vector defined as $\phi_l=(\phi_l(1),\phi_l(2),\cdots,\phi_l(n))^T \in \mathbb{R}^n $ for $l \in [K]$, 
Finally, $\Phi \in \mathbb{R}^{n \times K}$ is defined to be the matrix $\Phi = [\phi_1, \cdots, \phi_K]$.

We thus have that $V_{\theta}(s) = \theta^T \phi(s)$ and the approximate TD($0$) update becomes
%Thus, the following equations hold for linear function approximation $V_{\theta}$ defined by (\ref{eq:V_theta}):$V_{\theta}(s)=\theta^T \phi(s)$, $\triangledown V_{\theta}(s)=\phi(s)$ for $s \in \mathcal{S}$. Then, for TD($0$) with linear function approximation, corresponding to the classical TD($0$) algorithm (\ref{eq:td}), its update can be written as 
\begin{equation}\label{eq:lineartd}
     \theta_{t+1}=\theta_t + \alpha_t \left(r_t + \gamma \theta_t^T \phi(s'_t) - \theta_t^T \phi(s_t)\right) \phi(s_t).
\end{equation}

Next we state a common assumption on the  feature vectors, which requires that features used for approximation are linearly independent  \citep{tsitsiklis1997analysis,bhandari2018finite}.
\begin{assumption}\label{ass:features}
The matrix $\Phi$ has full column rank, i.e., the feature vectors $\{ \phi_1, \ldots, \phi_K\}$ are linearly independent. Additionally, we also assume that $ \|\phi(s)\|_2^2 \leq 1$ for $s \in \mathcal{S}$.
\end{assumption}

It is always possible to make sure this assumption holds. If the norm bound is unsatisfied, then the standard approach is to normalize the feature vectors so that it does. If the matrix $\Phi$ does not have full column rank, one can simply omit enough feature vectors so that it does; since the range of $\Phi$ is unaffected, this does not harm the quality of approximation.

%\subsection{Temporal Difference Learning Limit Point}

It is well-known that under Assumptions \ref{ass:mc}-\ref{ass:features} as well as an additional assumption on the decay of the step-sizes $\alpha_t$, temporal difference learning converges with probability one;  furthermore, its limit is the fixed point of a certain projected Bellman equation
\citep{tsitsiklis1997analysis}. Henceforth we will use $\theta^*$ to denote this fixed point.

It is convenient to introduce the notation 
\begin{equation}\label{eq:def_gt}
    g_t(\theta) = \left(r_t + \gamma \phi(s'_t)^T \theta - \phi(s_t)^T \theta \right) \phi(s_t). 
\end{equation} for the direction taken by TD(0) at time $t$. Note that $g_t(\theta)$ is a scalar multiple of $\phi(s_t)$, the feature vector of the state encountered at time $t$.

Furthermore, $\bar{g}(\theta)$ will denote the average of $g_t(\theta)$ when the state is sampled according to the stationary distribution: 

$$\bar{g}(\theta) = \sum_{s,s' \in \mathcal{S}} \pi(s)P(s,s')\left(r(s,s')+\gamma \phi(s')^T \theta -\phi(s)^T \theta\right)\phi(s).$$Naturally it can be seen  (see \citet{tsitsiklis1997analysis}) that
\begin{equation} \bar{g}(\theta^*) = 0. \label{eq:gbarthetastar}
\end{equation}

\subsection{Eligibility traces}
We will also study a larger class of algorithms, denoted by TD($\lambda$) and parameterized by $\lambda \in [0,1]$, that contains as a special case the TD(0) algorithm discussed above. While TD($0$)  makes parameter updates in the direction of the (scaled) last feature vector  $g_t(\theta_t)$,  % which only contains the direction of the feature vector at the current state, i.e. $\phi(s_t)$, by (\ref{eq:def_gt}). For $\lambda > 0$, 
the TD($\lambda$) algorithm maintains the  ``eligibility trace''  $$z_t = \sum_{k=-\infty}^t (\gamma \lambda)^k \phi(s_{t-k}),$$ which is a geometric weighted average of the feature vectors at all previously visited states, and takes a step in the direction of $z_t$. 

%It is standard to have the sum start from negative infinity (see \citet{tsitsiklis1997analysis}) corresponding to the assumption that the process has reached steady-state. 
In practice, the sum will start at $k=0$ (or some other finite time); however, parts of the analysis are done with the sum starting at negative infinity because many of the results are much simpler in this setting, and doing so introduces only an exponentially decaying error term.
%However, the difference between these two assumptions is only an extra term that decays exponentially in $t$, so this ``steady state'' version is an attractive model as it typically simplifies the analysis in this setting. 

It is shown in \citet{tsitsiklis1997analysis} that, subject to Assumptions \ref{ass:mc}-\ref{ass:features} and appropriate decay of step-sizes,  TD($\lambda$) converges with probability one, and its limit is a fixed point of a certain projected \& averaged Bellman equation. We will denote this limit by $\theta_{\lambda}^*$. 

\subsection{Markov Chain Observation Model}
In this paper, we are interested in TD in the setting where the data is collected from a
single sample path of a Markov chain. Our final assumption is that the Markov chain mixes at a uniform geometric rate..

\begin{assumption}\label{ass:mix}
There are constants $m > 0$ and $\rho \in (0, 1)$ such that
\begin{equation*}
    \sup_{s \in \mathcal{S}} d_{{\rm TV}} (P^t(s,\cdot),\pi) \leq m \rho^t \quad t \in \mathbb{N}_0,
\end{equation*}where $d_{{\rm TV}}(P,Q)$ denotes the total-variation distance between probability measures $P$ and $Q$. In addition, the initial distribution of $s_0$ is the steady-state distribution $\pi$, so $(s_0, s_1, \cdots)$ is a stationary sequence.
\end{assumption}

Under Assumption \ref{ass:mc}, i.e., for irreducible and aperiodic Markov chains, the uniform mixing assumption always holds \citep{levin2017markov}. 
The assumption that the chain begins in steady-state allows us to simplify many mathematical expressions. It is worth noting that the assumption that $s_0$ is the the stationary distribution is primarily done to make the analysis and results tidier: given the uniform mixing assumption, one can apply analysis after the Markov chain is close to its steady-state.

\section{Our main result: a new interpretation of temporal difference learning}

%define a splitting (i.e., "given a quadratic function f(x), we will say the linear function h(x) is a splitting of the gradient of f(x) if..."), and explain that the typical proof for the convergence of gradient descent only relies on inner products so it goes through without modification for a splitting. 

All existing analysis temporal difference learning proceed by comparing it, either explicitly or implicitly, to the expected update, usually referred to as the mean-path update; for TD(0), this is 
\begin{equation}\label{eq:mean_path_TD}
     \theta_{t+1}=\theta_t + \alpha_t \bar{g}(\theta_t).
\end{equation}

Stochastic approximation \cite{robbins1951stochastic} is a common tool to make this comparison. Generally, one wants to ague that the mean-path TD update brings $\theta_t$ closer to its final value $\theta^*$.

The first theoretical analysis of TD(0) in  \citet{tsitsiklis1997analysis} proceeded based on the observation  that $\bar{g}(\theta)$ forms a positive angle with $\theta^* - \theta$, that is  \begin{equation} \label{eq:posnagle} \bar{g}(\theta)^T (\theta^* - \theta) > 0.
\end{equation}  An explicit version of this inequality was used in \cite{bhandari2018finite} where it is stated as Lemma 3:  
\begin{equation}\label{eq:lem3}
    \bar{g}(\theta)^T (\theta^* - \theta) \geq (1-\gamma) \|V_{\theta^*} - V_{\theta}\|_D^2.
\end{equation}
%which indicate that the updates of mean-path TD not only move $\theta_k$ toward $\theta^*$, but also do more strongly when the gap between value functions under $\theta$ and $\theta^*$ is large.

Our main result is a new interpretation of the quantity $\bar{g}(\theta)$ which explains why such an inequality holds, as well as allows us to derive stronger results. To do this, we first introduce the concept of a ``gradient splitting.'' 

\begin{definition} Let $A$ be a symmetric positive semi-definite matrix. A linear function $h(\theta) = B(\theta-a)$ is called a gradient splitting of the quadratic  $f(\theta) = (\theta - a)^T A (\theta-a)$ if 
$$B+B^T=2A.$$ 
\end{definition}

To the best of our knowledge, the concept is introduced here for the first time. We next explain why it is useful. 

\subsection{Gradient splitting and gradient descent}

Observe first that, as one should expect from the name, $(1/2) \nabla f(\theta)$ is a splitting of the gradient of $f$ since 
\[ \frac{1}{2} \nabla f(\theta) = A (\theta - a). \] 
Of course, it is far from the only splitting, since there are many $B$ that satisfy $B+B^T = 2A$. In particular, $B$ may be non-symmetric. For example, one can take $B$ to be equal to the upper triangular part of $2A$ plus the diagonal of $A$.   

The key property of splittings that make them useful is the following. 

\begin{proposition} Suppose $h(\theta)$ is a splitting of  $f(\theta)$. Then  
\begin{align*} (\theta_1 - \theta_2)^T \left( h(\theta_1) - h(\theta_2) \right) & = \frac{1}{2} (\theta_1 - \theta_2)^T \left( \nabla f(\theta_1) - \nabla f (\theta_2) \right).  %\\
%& = (\theta_1 - \theta_2)^T A (\theta_1 - \theta_2)
\end{align*} 
 \label{prop:splitmotivation}
\end{proposition}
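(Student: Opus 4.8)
The plan is to reduce both sides of the identity to the single quadratic form $(\theta_1-\theta_2)^T A (\theta_1-\theta_2)$ by a direct computation. Write $v = \theta_1 - \theta_2$ for brevity. Since $h$ is affine with linear part $B$, we have $h(\theta_1) - h(\theta_2) = B(\theta_1 - a) - B(\theta_2 - a) = Bv$, so the left-hand side equals $v^T B v$.

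The key observation is that $v^T B v$ is a scalar, hence equal to its own transpose $v^T B^T v$; averaging the two expressions gives $v^T B v = \tfrac12 v^T (B + B^T) v = \tfrac12 v^T (2A) v = v^T A v$, where the defining property $B + B^T = 2A$ of a gradient splitting is used. This is the only place the splitting hypothesis enters, and it is essentially the whole content of the proposition: in the bilinear pairing $v^T(\cdot)v$, a possibly non-symmetric $B$ behaves exactly like its symmetric part $A$.

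For the right-hand side, I would compute $\nabla f(\theta) = 2 A (\theta - a)$ (using that $A$ is symmetric), so $\nabla f(\theta_1) - \nabla f(\theta_2) = 2 A v$, and therefore $\tfrac12 v^T (\nabla f(\theta_1) - \nabla f(\theta_2)) = v^T A v$. Comparing with the previous paragraph establishes the claimed equality.

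Since each step is a one-line manipulation, there is no genuine obstacle here; the only point worth isolating is the symmetrization identity $v^T B v = v^T A v$, which I would record as a short standalone remark because it will be reused when specializing from a general splitting to the TD direction $\bar g$.
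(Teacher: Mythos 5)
Your proof is correct and follows essentially the same route as the paper's: both reduce the left-hand side to $v^T B v$, symmetrize via $v^T B v = \tfrac12 v^T(B+B^T)v = v^T A v$, and identify this with $\tfrac12 v^T(\nabla f(\theta_1)-\nabla f(\theta_2))$ using $\nabla f(\theta)=2A(\theta-a)$. No differences worth noting.
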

\begin{proof} Indeed, 
\begin{align*} (\theta_1 - \theta_2)^T \left( h(\theta_1) - h(\theta_2) \right) 
 = & (\theta_1 - \theta_2)^T B (\theta_1 - \theta_2) \\ 
 = & \frac{1}{2} (\theta_1 - \theta_2)^T B (\theta_1 - \theta_2) + \frac{1}{2} (\theta_1 - \theta_2)^T B^T (\theta_1 - \theta_2) \\ 
 = & (\theta_1 - \theta_2)^T A (\theta_1 - \theta_2) \\ 
 = & \frac{1}{2} (\theta_1 - \theta_2)^T (\nabla f(\theta_1) - \nabla f(\theta_2)).
\end{align*} 
\end{proof} 

Thus, while $h(\theta)$ may be quite different from $\nabla f(\theta)$, the difference disappears once one looks at the inner products considered in Proposition \ref{prop:splitmotivation}. 

A particular consequence of Proposition \ref{prop:splitmotivation} can be obtained by plugging in $\theta_1 = a$, the global minimizer of $f(\theta)$. In that case, $\nabla f(a) = 0$ and $h(a)=0$ as well, and we obtain that for all $\theta$, 
\begin{equation} \label{eq:posprod} (a - \theta)^T h(\theta) = (a - \theta)^T \nabla f(\theta). \end{equation}

Thus {\em the splitting $h(\theta)$ has the exact same same angle with the ``direction to the optimal solution'' $a - \theta$ as the true gradient. } This is the punchline of this discussion.

Most analysis of gradient descent on convex functions are ultimately based on the observation that gradient descent ``makes progress''  towards the optimal solution because it has a positive inner product with the direction to optimality. As a consequence of this discussion, the same argument can be applied to gradient splittings.

\subsection{Our contribution} 

We now come back to temporal difference learning. To analyze TD learning, it is tempting to see if we can write the TD(0) and TD($\lambda$) updates as gradient descent on some appropriately chosen function. Unfortunately, it is well-known (and easy to see) that this cannot work. Indeed, in the TD(0) case, it is possible to express the average direction $\overline{g}(\theta)$ as  $\overline{g}(\theta) = B (\theta - \theta^*)$ and in some cases the matrix $B$ is not symmetric; this linear map cannot be the gradient of anything since the non-symmetry of $B$ would contradict equality of partial derivatives (see \cite{maei2011gradient}).

Our main results show that the temporal difference direction can, however, be viewed as a splitting of the gradient of an appropriately chosen function.  

\begin{theorem}\label{thm:split} Suppose Assumptions \ref{ass:mc}-\ref{ass:features} hold. Then in the TD(0) update, 
   $-\bar{g}(\theta)$ is a splitting of the gradient of the quadratic  $$f(\theta) =  (1-\gamma)\|V_{\theta}-V_{\theta^*}\|_{D}^2 + \gamma \|V_{\theta}-V_{\theta^*}\|_{{\rm Dir}}^2.$$
\end{theorem}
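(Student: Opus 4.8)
The plan is to put $-\bar g(\theta)$ into the form $B(\theta-\theta^*)$ for an explicit matrix $B$, to check that the quadratic $f$ in the statement equals $(\theta-\theta^*)^T B(\theta-\theta^*)$, and then to set $A := (B+B^T)/2$. With those choices the splitting identity $B+B^T=2A$ holds by construction, so the only remaining things to verify are that $A$ is symmetric and positive semidefinite, and both will be immediate. In particular the splitting is essentially forced on us: $A$ must be the symmetrization of $B$, and the content of the theorem is that this symmetrization is exactly reproduced by the combination of a $D$-norm-squared and a Dirichlet-seminorm-squared appearing in $f$.

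\textbf{Step 1: a closed form for $-\bar g(\theta)$.} Starting from the definition of $\bar g$ and subtracting $\bar g(\theta^*)=0$ (equation \eqref{eq:gbarthetastar}), the reward terms cancel, leaving $-\bar g(\theta) = \big(\sum_{s,s'}\pi_s P(s,s')\,\phi(s)(\phi(s)-\gamma\phi(s'))^T\big)(\theta-\theta^*)$. Using $\sum_{s'}P(s,s')=1$ and writing $D=\mathrm{diag}(\pi_1,\dots,\pi_n)$, this is $B(\theta-\theta^*)$ with $B=\Phi^T D\Phi-\gamma\Phi^T D P\Phi=\Phi^T D(I-\gamma P)\Phi$, where the $s$-th row of $\Phi$ is $\phi(s)^T$. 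This recovers (and explains) the remark preceding the theorem that $\bar g(\theta)$ is a possibly non-symmetric linear map of $\theta-\theta^*$.

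\textbf{Step 2: rewriting $f$ as a quadratic in $\theta-\theta^*$.} Put $V:=V_\theta-V_{\theta^*}=\Phi(\theta-\theta^*)$, so the first term of $f$ is $\|V\|_D^2=(\theta-\theta^*)^T\Phi^T D\Phi(\theta-\theta^*)$. For the Dirichlet term I would expand the square $(V(s')-V(s))^2$ and use the invariance $\pi P=\pi$ (Assumption \ref{ass:mc}) to collapse $\sum_{s,s'}\pi_s P(s,s')V(s')^2$ and $\sum_{s,s'}\pi_s P(s,s')V(s)^2$ each to $\|V\|_D^2$, giving the standard identity $\|V\|_{\mathrm{Dir}}^2=\|V\|_D^2-V^T D P V$. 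Substituting into $f=(1-\gamma)\|V\|_D^2+\gamma\|V\|_{\mathrm{Dir}}^2$, the $(1-\gamma)$ and $\gamma$ weights combine so that $f(\theta)=(\theta-\theta^*)^T\Phi^T D(I-\gamma P)\Phi(\theta-\theta^*)$, i.e. $f(\theta)=(\theta-\theta^*)^T B(\theta-\theta^*)$ with exactly the $B$ from Step 1.

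\textbf{Step 3: conclusion.} Since $(\theta-\theta^*)^T B(\theta-\theta^*)=(\theta-\theta^*)^T\tfrac{B+B^T}{2}(\theta-\theta^*)$, take $A:=(B+B^T)/2$: it is symmetric by construction, and positive semidefinite because $(\theta-\theta^*)^T A(\theta-\theta^*)=f(\theta)\ge 0$ for all $\theta$, being a nonnegative combination of a squared norm and a squared seminorm. Then $h(\theta):=-\bar g(\theta)=B(\theta-\theta^*)$ satisfies $B+B^T=2A$, which is precisely the statement that $-\bar g$ is a gradient splitting of $f(\theta)=(\theta-\theta^*)^T A(\theta-\theta^*)$, as claimed. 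None of the steps is deep — the argument is a short computation — and the only real obstacle is bookkeeping: the one place the hypotheses genuinely enter is the Dirichlet identity in Step 2, which rests on $\pi P=\pi$ from Assumption \ref{ass:mc}, and one must resist conflating $A$ with the (non-symmetric) matrix $\Phi^T D(I-\gamma P)\Phi$ itself, since it is precisely that gap that makes $-\bar g$ a splitting rather than an honest gradient.
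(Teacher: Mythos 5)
Your proposal is correct and follows essentially the same route as the paper's proof: both derive the closed form $-\bar g(\theta)=B(\theta-\theta^*)$ with $B=\Phi^T D(I-\gamma P)\Phi$ (the paper writes this as $E[\phi(\phi-\gamma\phi')^T]$), both establish the identity $\|V\|_{\rm Dir}^2=\|V\|_D^2-V^TDPV$ via stationarity of $\pi$, and both conclude that $f(\theta)=(\theta-\theta^*)^TB(\theta-\theta^*)$ so that the symmetrization of $B$ is the matrix of the quadratic. The only cosmetic difference is that you identify $A=(B+B^T)/2$ directly, while the paper writes out $\nabla f$ explicitly before reading off the splitting condition.
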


\begin{theorem}\label{thm:lambda_iden} Suppose Assumptions \ref{ass:mc}-\ref{ass:features} hold. Then, in the TD($\lambda$) update, the negative of the expected update $-E[z_t]$  is a splitting of the gradient of the quadratic 
\begin{small}
\begin{equation*} 
f^{(\lambda)}(\theta)  =   (1-\gamma \kappa) \| V_{\theta} - V_{\theta_{\lambda}^*} \|_{D}^2 + (1-\lambda) \gamma ||V_{\theta} - V_{\theta_{\lambda}^*}||_{\rm Dir}^2 + (1-\lambda) \lambda \gamma^2 ||V_{\theta} - V_{\theta_{\lambda}^*}||_{\rm Dir, 2}^2  + (1-\lambda) \lambda^2 \gamma^3 ||V_{\theta} - V_{\theta_{\lambda}^*}||_{\rm Dir,3}^2 + \cdots,
\end{equation*}
\end{small}where 
  $\kappa = (1-\lambda)/(1 - \gamma \lambda)$.
\end{theorem}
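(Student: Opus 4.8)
The plan is to run the same argument as for Theorem~\ref{thm:split}, only with a richer quadratic. Write $\bar{g}^{(\lambda)}(\theta)$ for the mean-path TD($\lambda$) direction, i.e.\ the steady-state expectation of the update $\delta_t(\theta)z_t$ when $s_t\sim\pi$ — the object written $E[z_t]$ in the theorem, with the TD-error factor suppressed. \textbf{Step 1 (closed form of the mean-path direction).} By the standard eligibility-trace computation of \citet{tsitsiklis1997analysis} — using stationarity and $E[\phi(s_{t-k})\phi(s_t)^T]=\Phi^T D (P^{\mu})^k\Phi$ for $k\ge 0$ — the part of $\bar{g}^{(\lambda)}(\theta)$ that is linear in $\theta$ has coefficient matrix $-\Phi^T D (I-\gamma P^{\mu})(I-\gamma\lambda P^{\mu})^{-1}\Phi$. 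Since $\bar{g}^{(\lambda)}(\theta_{\lambda}^*)=0$ (the TD($\lambda$) analogue of \eqref{eq:gbarthetastar}), the constant term cancels and
$$-\bar{g}^{(\lambda)}(\theta)=B^{(\lambda)}(\theta-\theta_{\lambda}^*),\qquad B^{(\lambda)}=\Phi^T D\,M\,\Phi,\qquad M:=(I-\gamma P^{\mu})(I-\gamma\lambda P^{\mu})^{-1}.$$
Expanding the resolvent as a Neumann series rewrites $M = I - \sum_{j\ge 1}(1-\lambda)\lambda^{j-1}\gamma^j (P^{\mu})^j$.

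\textbf{Step 2 (symmetrize).} I would then compute $B^{(\lambda)}+(B^{(\lambda)})^T=\Phi^T\bigl(DM+M^TD\bigr)\Phi$. The one identity that does all the work is the matrix form of the $j$-step Dirichlet seminorm: because $\pi$ is stationary for every power $(P^{\mu})^j$, the same computation that expands $\|x\|_{{\rm Dir},j}^2$ gives $\tfrac12\bigl(D(P^{\mu})^j+((P^{\mu})^j)^TD\bigr)=D-\widetilde D_j$, where $\widetilde D_j\succeq 0$ is the symmetric matrix with $x^T\widetilde D_j x=\|x\|_{{\rm Dir},j}^2$. Substituting this for each $j$, and using $\sum_{j\ge1}(1-\lambda)\lambda^{j-1}\gamma^j=\gamma\kappa$ with $\kappa=(1-\lambda)/(1-\gamma\lambda)$, gives
$$DM+M^TD=2(1-\gamma\kappa)D+2\sum_{j\ge1}(1-\lambda)\lambda^{j-1}\gamma^j\,\widetilde D_j.$$

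\textbf{Step 3 (conclude).} Hence $B^{(\lambda)}+(B^{(\lambda)})^T=2A^{(\lambda)}$ with $A^{(\lambda)}=\Phi^T\bigl[(1-\gamma\kappa)D+\sum_{j\ge1}(1-\lambda)\lambda^{j-1}\gamma^j\widetilde D_j\bigr]\Phi$, which is symmetric and positive semidefinite since every coefficient is nonnegative (in particular $1-\gamma\kappa=(1-\gamma)/(1-\gamma\lambda)>0$). Substituting $V_{\theta}-V_{\theta_{\lambda}^*}=\Phi(\theta-\theta_{\lambda}^*)$ term by term shows $(\theta-\theta_{\lambda}^*)^TA^{(\lambda)}(\theta-\theta_{\lambda}^*)=f^{(\lambda)}(\theta)$, so by definition $-\bar{g}^{(\lambda)}$ is a gradient splitting of $f^{(\lambda)}$. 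As a consistency check, $\lambda=0$ gives $\kappa=1$ and annihilates the terms $j\ge2$, recovering Theorem~\ref{thm:split}; $\lambda=1$ gives $\kappa=0$ and annihilates every Dirichlet term, leaving $f^{(1)}(\theta)=\|V_{\theta}-V_{\theta_1^*}\|_D^2$, as one expects for TD(1).

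\textbf{Where the difficulty is.} Steps 2--3 are essentially the proof of Theorem~\ref{thm:split} with one extra infinite sum, so the real work is Step 1: pinning down the steady-state expectation of the eligibility-trace update in closed form. This means handling the (one- or bi-)infinite sum defining $z_t$, justifying the interchange of expectation with that sum, and using stationarity of $(s_t)$ — the place where one leans most on \citet{tsitsiklis1997analysis}. A minor secondary point is checking that the series $\sum_j (1-\lambda)\lambda^{j-1}\gamma^j\widetilde D_j$ converges (immediate, since $\|\widetilde D_j\|$ is uniformly bounded and $\gamma\lambda<1$) so that $f^{(\lambda)}$ is well defined.
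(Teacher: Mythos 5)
Your proposal is correct and follows essentially the same route as the paper: both obtain the mean-path TD($\lambda$) coefficient matrix $\Phi^T D\bigl(I-(1-\lambda)\sum_{m\ge0}\lambda^m\gamma^{m+1}P^{m+1}\bigr)\Phi$ (the paper via Lemma~8 of \citet{tsitsiklis1997analysis} and the operator $T^{(\lambda)}$, you via the equivalent Neumann expansion of $(I-\gamma P)(I-\gamma\lambda P)^{-1}$), express each $k$-step Dirichlet seminorm as the quadratic form of $D-DP^k$, and match symmetric parts using $\sum_{j\ge1}(1-\lambda)\lambda^{j-1}\gamma^j=\gamma\kappa$. The only difference is direction of presentation — the paper rewrites $f^{(\lambda)}$ until the splitting is immediate, while you symmetrize $B^{(\lambda)}$ to reconstruct $f^{(\lambda)}$ — which is cosmetic; your $\lambda\in\{0,1\}$ consistency checks are a nice addition not in the paper.
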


The proof of these theorems can be found in the supplementary information. Assumptions \ref{ass:mc} and \ref{ass:features} are not particularly crucial: they are used only to be able to define the stationary distribution $\pi$ and the unique fixed point $\theta^*$. 

These results provide some new insights into why temporal difference learning works. Indeed, there is no immediate reason why the bootstrapped update of (\ref{eq:lineartd}) should produce a reasonable answer, and it is well known that the version with nonlinear approximation in (\ref{eq:td}) can diverge (see  \citet{tsitsiklis1997analysis}). Convergence analyses of TD learning rely on (\ref{eq:posnagle}), but the proof of this equation from \cite{tsitsiklis1997analysis} does not yield a conceptual reason for why it should hold. 

The previous two theorems provide such a conceptual reason. It turns out that TD(0) and TD($\lambda$) are, on  average, attempting to minimize the functions $f(\theta)$ and $f^{(\lambda)}(\theta)$ respectively by moving in direction of a gradient splitting.  Moreover, the functions $f(\theta)$ and $f^{(\lambda)}(\theta)$ are plainly convex (they are positive linear  combinations of convex quadratics), so that Equation (\ref{eq:posprod}) immediately explains why (\ref{eq:posnagle}) holds. As far as we are aware, these results are new even in the case when no function approximation is used, which can be recovered by simply choosing the vectors $\phi_i$ to be the unit basis vectors. 

These theorems are inspired by the recent preprint \citep{ollivier2018approximate}. It is shown there that,  if $P$ is reversible, then $-\bar{g}(\theta)$ is exactly the gradient of the function $f(\theta)$. Theorem \ref{thm:split} may be viewed as a way to generalize this observation to the non-reversible case. 

\section{Consequences}

We now discuss several consequences of our new interpretation of temporal difference learning. These will all be along the lines of improved convergence guarantees. Indeed, as we mentioned in the previous section, viewing TD learning as gradient splitting allows us to take existing results for gradient descent and ``port'' them almost verbatim to the temporal difference setting.

In the main body of the paper, we focus on TD(0); the case of TD($\lambda$) is discussed in the supplementary information. As mentioned earlier, existing analyses of TD(0) rely on (\ref{eq:posnagle}) as well as its refinement (\ref{eq:lem3}). However, as a consequence of Proposition \ref{prop:splitmotivation}, we can actually write out explicitly the inner product between the mean TD(0) direction $\bar{g}(\theta)$ and the direction to optimality $\theta^*-\theta$. 

\begin{corollary}\label{cor:identity} 
    For any $\theta \in \mathbb{R}^K$,  \begin{equation}\label{eq:identity}
        (\theta^* - \theta)^T \bar{g}(\theta) = (1-\gamma)\|V_{\theta^*}-V_{\theta}\|_{D}^2 + \gamma \|V_{\theta^*}-V_{\theta}\|_{{\rm Dir}}^2.
    \end{equation} 
\end{corollary}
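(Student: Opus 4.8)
The plan is to obtain (\ref{eq:identity}) as an immediate consequence of Theorem \ref{thm:split} together with Proposition \ref{prop:splitmotivation}. First I would observe that the function $f$ appearing in Theorem \ref{thm:split} is a genuine quadratic form in $\theta - \theta^*$: since $V_\theta - V_{\theta^*} = \Phi(\theta-\theta^*)$, we have $\|V_\theta-V_{\theta^*}\|_D^2 = (\theta-\theta^*)^T\Phi^T D\Phi(\theta-\theta^*)$, and $\|V_\theta-V_{\theta^*}\|_{{\rm Dir}}^2$ is likewise a quadratic form $(\theta-\theta^*)^T M_{{\rm Dir}}(\theta-\theta^*)$ whose symmetric positive semidefinite matrix $M_{{\rm Dir}}$ one reads off directly from (\ref{eq:def_Dir}). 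Hence $f(\theta) = (\theta-\theta^*)^T A(\theta-\theta^*)$ with $A = (1-\gamma)\Phi^T D\Phi + \gamma M_{{\rm Dir}}$ symmetric positive semidefinite; in particular $\theta^*$ is a global minimizer of $f$, so $\nabla f(\theta^*)=0$ and $\nabla f(\theta) = 2A(\theta-\theta^*)$.

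Next I would invoke Theorem \ref{thm:split}, which states that $h(\theta):=-\bar g(\theta)$ is a gradient splitting of this $f$; by definition this means $h(\theta)=B(\theta-\theta^*)$ for some (generally non-symmetric) $B$ with $B+B^T=2A$, which is consistent with $h(\theta^*)=-\bar g(\theta^*)=0$ from (\ref{eq:gbarthetastar}). Applying Proposition \ref{prop:splitmotivation} with $\theta_1=\theta$ and $\theta_2=\theta^*$, and using $h(\theta^*)=0=\nabla f(\theta^*)$, gives
\[ (\theta-\theta^*)^T h(\theta) \;=\; \tfrac12(\theta-\theta^*)^T\nabla f(\theta) \;=\; (\theta-\theta^*)^T A(\theta-\theta^*) \;=\; f(\theta), \]
equivalently $(\theta^*-\theta)^T\bar g(\theta)=f(\theta)$, which is exactly (\ref{eq:identity}) once we note $\|V_{\theta^*}-V_\theta\|=\|V_\theta-V_{\theta^*}\|$. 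One could equally well simply quote the displayed identity (\ref{eq:posprod}) with $a=\theta^*$.

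I do not expect any real obstacle here: the entire substantive content of the corollary is already packaged inside Theorem \ref{thm:split}, and what remains is the routine identification of $f$ as a positive semidefinite quadratic plus careful sign bookkeeping. The one point deserving a sentence of care is that the matrix $B$ with $\bar g(\theta)=-B(\theta-\theta^*)$ is in general \emph{not} symmetric, so the crucial step $(\theta-\theta^*)^T B(\theta-\theta^*) = (\theta-\theta^*)^T A(\theta-\theta^*)$ relies precisely on the splitting condition $B+B^T=2A$ guaranteed by Theorem \ref{thm:split}, rather than on $B$ itself being equal to $A$.
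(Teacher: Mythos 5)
Your proposal is correct and follows essentially the same route as the paper: both derive the identity from Theorem \ref{thm:split} and Proposition \ref{prop:splitmotivation} applied at $\theta_1=\theta$, $\theta_2=\theta^*$, using $\bar g(\theta^*)=0$ and the fact that for a symmetric quadratic $(a-\theta)^T(\nabla q(a)-\nabla q(\theta))=2q(\theta)$. The sign bookkeeping and the remark that only $B+B^T=2A$ (not symmetry of $B$) is needed are both accurate.
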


\begin{proof} Indeed, we can use (\ref{eq:gbarthetastar})  to argue that 
\begin{align}
    (\theta^*-\theta)^T \bar{g}(\theta) & =   
     (\theta^*-\theta)^T \left(  - \bar{g} (\theta^*) - (- \bar{g}(\theta) ) ) \right) \nonumber \\
    & =  \frac{1}{2} (\theta^* - \theta)^T (\nabla f(\theta^*) - \nabla f(\theta)), \label{eq:fdiff}
\end{align} where the last step follows by Theorem \ref{thm:split} and Proposition \ref{prop:splitmotivation}; here $f(\theta)$ is the function from Theorem \ref{thm:split}.

However, for any quadratic function $q(\theta) = ( \theta - a)^T P (\theta - a)$ where $P$ is a symmetric matrix, we have that 
\[ (a-\theta)^T (\nabla q(a) - \nabla q(\theta)) = 2 q(\theta). \]
Applying this to the function $f(\theta)$ in (\ref{eq:fdiff}), we complete the proof. 
\end{proof} 

This corollary should be contrasted to (\ref{eq:posnagle}) and (\ref{eq:lem3}). It is clearly a strengthening of those equations. More importantly, this is an equality, whereas (\ref{eq:posnagle}) and (\ref{eq:lem3}) are inequalities. We thus see that the average TD(0) direction makes more progress in the direction of the optimal solution compared to the previously available bounds. 

In the remainder of the paper,  we will use Corollary \ref{cor:identity} to obtain improved convergence times for TD(0); also, a natural generalization of that Corollary which appeals to Theorem \ref{thm:lambda_iden} instead of Theorem \ref{thm:split}  results in improved convergence times for TD($\lambda$), as explained in the supplementary information. We focus on a particular property which is natural in this context: scaling with the discount factor $\gamma$. 

Indeed, as we discussed in the introduction, an undesirable feature of some of the existing analyses of temporal difference learning is that they scale multiplicatively with $1/(1-\gamma)$. It is easy to see why this should be so: existing analyses rely on variations of (\ref{eq:lem3}), and as $\gamma \rightarrow 0$, that equation guarantees smaller and smaller progress towards the limit. Unfortunately, it is natural to set the discount factor close to $1$ in order to avoid focusing on short-term behavior of the policy. 

But now we can instead rely on Corollary \ref{cor:identity} and this corollary suggests that as $\gamma \rightarrow 1$, the inner product between the expected TD(0) direction $\bar{g}(\theta)$ and the direction to the optimal solution $\theta^*-\theta$ will be lower bounded by $\gamma ||V_{\theta} - V_{\theta^*}||_{\rm Dir}^2$. A difficulty, however, is that the Dirichlet semi-norm can be zero even when applied to a nonzero vector. We next discuss the results we are able to derive with this approach.

\subsection{Improved error bounds}

As mentioned earlier, a nice consequence of the gradient splitting interpretation is that we can apply the existing proof for gradient descent almost verbatim to gradient splittings. In particular, temporal difference learning when the states are sampled i.i.d. could be analyzed by simply following existing analyses of noisy gradient descent. However, under our Markov observation model, it is not true that the samples are i.i.d; rather, we proceed by modifying the analysis of so-called {\em Markov Chain Gradient Descent}, analyzed in the papers \cite{sun2018markov, johansson2010randomized}. 

 One issue is the choice of step-size. The existing literature on temporal difference learning contains a range of possible step-sizes from $O(1/t)$ to $O(1/\sqrt{t})$ (see \cite{bhandari2018finite, dalal2017finite, lakshminarayanan2018linear}). A step-size that scales as $O(1/\sqrt{t})$ is often preferred because, for faster decaying step-sizes, performance will scale with the smallest eigenvalue of   $\Phi^T D \Phi$ or related quantity, and these can be quite small. This is not the case, however, for a step-size that decays like $O(1/\sqrt{t})$.

We will be using the standard notation 
%$$\sigma^2 = E[\|g_t(\theta^*)\|_2^2],$$ to denote a bound on the update variance at the fixed point, and 
$$\bar{\theta}_T = \frac{1}{T} \sum_{t=0}^{T-1} \theta_t,$$ to denote the running average of the iterates $\theta_t$. 

We will be considering the projected TD(0) update 
\begin{equation}\label{eq:projecttd}
    \theta_{t+1} = {\rm Proj}_{\Theta}(\theta_t + \alpha_t g_t(\theta_t)),
\end{equation} where $\Theta$ is a convex set containing the optimal solution $\theta^*$. Moreover, we will assume that the norm of every element in $\Theta$ is at most $R$. Setting $G=r_{\rm max}+2R$, we have the following error bound.

\begin{corollary}\label{thm:bound}
 Suppose Assumptions \ref{ass:mc}-\ref{ass:mix} hold. Suppose further that $(\theta_t)_{t \geq 0}$ is generated by the  Projected TD algorithm of (\ref{eq:projecttd}) with  $\theta^* \in \Theta$ and $\alpha_0 = \dots = \alpha_T = 1/\sqrt{T}$.  Then \begin{equation}
     E  \left[ (1-\gamma)\|V_{\theta^*}-V_{\bar{\theta}_T}\|_{D}^2 + \gamma \|V_{\theta^*}-V_{\bar{\theta}_T}\|_{{\rm Dir}}^2 \right]   \leq \frac{ \| \theta^*-\theta_{0}\|_2^2 + G^2\left[9 + 12\tau ^{{\rm mix}}\left(1/\sqrt{T}\right) \right]}{2\sqrt{T}}.\label{eq:bound}
 \end{equation}

\end{corollary}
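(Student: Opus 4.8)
The plan is to treat this as a convergence proof for projected stochastic (sub)gradient descent, using Corollary \ref{cor:identity} as the crucial ``progress lemma'' in place of the usual strong-monotonicity or convexity inequality. First I would write the standard one-step expansion for the projected iteration. Using nonexpansiveness of the Euclidean projection onto the convex set $\Theta$ (which contains $\theta^*$), we get
\begin{align*}
\|\theta_{t+1} - \theta^*\|_2^2 &\leq \|\theta_t + \alpha_t g_t(\theta_t) - \theta^*\|_2^2 \\
&= \|\theta_t - \theta^*\|_2^2 + 2\alpha_t (\theta_t - \theta^*)^T g_t(\theta_t) + \alpha_t^2 \|g_t(\theta_t)\|_2^2.
\end{align*}
The term $\|g_t(\theta_t)\|_2^2$ is bounded by $G^2 = (r_{\max} + 2R)^2$ using Assumption \ref{ass:features} ($\|\phi(s)\|_2 \leq 1$) and the bound $\|\theta_t\|_2 \leq R$ on iterates in $\Theta$, since $g_t(\theta)$ is a scalar multiple of $\phi(s_t)$ with the scalar bounded by $r_{\max} + 2R$. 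Rearranging and summing over $t = 0,\dots,T-1$ with constant step-size $\alpha = 1/\sqrt{T}$ gives a telescoping bound of the shape
$$
\sum_{t=0}^{T-1} (\theta^* - \theta_t)^T g_t(\theta_t) \leq \frac{\|\theta_0 - \theta^*\|_2^2}{2\alpha} + \frac{\alpha G^2 T}{2}.
$$

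Next I would take expectations and convert $(\theta^*-\theta_t)^T g_t(\theta_t)$ into $(\theta^* - \theta_t)^T \bar g(\theta_t)$ plus an error. If the samples were i.i.d.\ from $\pi$ we would have $E[g_t(\theta_t)\mid \theta_t] = \bar g(\theta_t)$ exactly, and then Corollary \ref{cor:identity} would convert the left-hand side directly into $\sum_t E\big[(1-\gamma)\|V_{\theta^*}-V_{\theta_t}\|_D^2 + \gamma\|V_{\theta^*}-V_{\theta_t}\|_{\rm Dir}^2\big]$. Finally, convexity of the quadratic $f$ from Theorem \ref{thm:split} (it is a nonnegative combination of convex quadratics) lets me apply Jensen's inequality to replace the average of $f(\theta_t)$ by $f(\bar\theta_T)$, yielding exactly the left side of \eqref{eq:bound} after dividing by $T$ and plugging $\alpha = 1/\sqrt{T}$.

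The main obstacle — and the only genuinely nontrivial step — is handling the Markovian dependence: $s_t$ is not drawn from $\pi$ independently of $\theta_t$, so $E[g_t(\theta_t)\mid \theta_t] \neq \bar g(\theta_t)$ in general. The fix is the standard mixing-time argument from Markov Chain Gradient Descent \cite{sun2018markov, johansson2010randomized} and \cite{bhandari2018finite}: bound $E[(\theta^*-\theta_t)^T(g_t(\theta_t) - \bar g(\theta_t))]$ by conditioning on the state $\tau$ steps in the past, where $\tau = \tau^{\rm mix}(1/\sqrt{T})$ is chosen so that the chain is within $1/\sqrt{T}$ of stationarity in total variation. This introduces two error contributions: a mixing error of order $\alpha G^2 \tau$ per step (from the TV bound, using $\|g_t\|_2, \|\bar g\| \leq G$ and $\|\theta^*-\theta_t\|\leq$ const), and a ``drift'' error controlling how much $\theta_t$ can move over $\tau$ steps, which is $O(\alpha^2 G^2 \tau)$ per step and hence lower order. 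Summing these over $T$ steps contributes the $G^2[9 + 12\tau^{\rm mix}(1/\sqrt{T})]$ term in the numerator; the precise constants $9$ and $12$ come from carefully bookkeeping the various $\|\cdot\| \leq G$ substitutions and the geometric-series tail from truncating the mixing bound. I would organize this as a short lemma bounding $\sum_t E[(\theta^* - \theta_t)^T(g_t(\theta_t) - \bar g(\theta_t))]$, then combine with the telescoping inequality, Corollary \ref{cor:identity}, and Jensen's inequality to conclude.
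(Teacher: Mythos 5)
Your proposal is correct and follows essentially the same route as the paper's proof: the nonexpansive-projection expansion, the decomposition of $(\theta^*-\theta_t)^T g_t(\theta_t)$ into $(\theta^*-\theta_t)^T\bar g(\theta_t)$ plus the bias term $\zeta_t(\theta_t)$, Corollary \ref{cor:identity} as the progress identity, Jensen's inequality for the averaged iterate, and the mixing-time bound on $\sum_t E[\zeta_t(\theta_t)]$ (which the paper imports wholesale as Lemma 11 of \citet{bhandari2018finite} rather than re-deriving). The only difference is one of packaging: you sketch the conditioning-$\tau$-steps-back argument yourself, whereas the paper cites it, and your bookkeeping of the constants $9$ and $12$ matches the paper's combination of the $G^2$ step-size term with the $G^2(4+6\tau^{\rm mix})$ bias bound.
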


The definition of $\tau ^{{\rm mix}}$ and proof is available in the supplementary information. We next compare this bound to the existing literature. The closest comparison is  Theorem 3(a) in \citep{bhandari2018finite} which shows that
 \begin{equation}
     E \left[\|V_{\theta^*}-V_{\bar{\theta}_T}\|_{D}^2 \right]  \leq \frac{ \| \theta^*-\theta_{0}\|_2^2}{2(1-\gamma)\sqrt{T}}   + \frac{G^2\left[9 + 12\tau ^{{\rm mix}}\left(1/\sqrt{T}\right) \right]}{2(1-\gamma)\sqrt{T}}.\label{eq:thm3} 
 \end{equation}

Corollary \ref{thm:bound} is stronger than this, because this bound can be derived from Corollary \ref{thm:bound} by ignoring the second term on the left hand side of (\ref{eq:bound}). Moreover, we next argue that Corollary \ref{thm:bound} is stronger an interesting way, in that it offers a new insight on the behavior of temporal difference learning. 

Observe that the bound of (\ref{eq:thm3}) blows up as $\gamma \rightarrow 1$. On the other hand, we can simply ignore the first term on the left-hand side of Corollary \ref{thm:bound} to obtain 
\begin{equation}
    E \left[\|V_{\theta^*}-V_{\bar{\theta}_T}\|_{{\rm Dir}}^2 \right]  \leq \frac{ \| \theta^*-\theta_{0}\|_2^2}{2 \gamma \sqrt{T}} +\frac{G^2\left[9 + 12\tau ^{{\rm mix}}\left(1/\sqrt{T}\right) \right]}{2 \gamma \sqrt{T}}. \label{eq:dir}
\end{equation} 
%here we could add the point that usually you do TD(0) in policy iteration; after we evaluate the value function, we do a policy improvement step, and for policy improvement it doesn't matter if you add a constant to every step. Also, you can refer to Ollivier's paper for a bound on the bias of the policy gradient in terms of the Dirichlet norm.  A point to make after Eq. (20) is that small Dirichlet norm means small policy gradient bias; see Proposition 4 of Ollivier's paper where he explains this.  

In particular, we see that $E \left[ ||V_{\theta^*} - V_{\bar{\theta}_T}||_{\rm Dir}^2 \right]$ does not blow up as $\gamma \rightarrow 1$. To understand this, recall that the Dirichlet semi-norm is equal to zero if and only if applied to a multiple of the all-ones vector. Consequently, $||V||_{\rm Dir}$ is properly thought of as norm of the projection of $V$ onto ${\bf 1}^\perp$. {\em We therefore obtain the punchline of this section:  the error of (averaged \& projected) temporal difference learning projected on ${\bf 1}^\perp$ does not blow up as the discount factor approaches 1.}

There are scenarios where this is already interesting. For example, if TD(0) is a subroutine of policy evaluation, it will be used for a policy improvement step, which is clearly unaffected by adding a multiple of the all-ones vector to the value function. Similarly, Proposition 4 of \cite{ollivier2018approximate} shows that the bias in the policy gradient computed from an approximation $\hat{V}$ to the true value function $V$ can be bounded solely in terms of $||V-\hat{V}||_{\rm Dir}^2$ (multiplied by a factor that depends  on how the policies are parameterized).

It is natural to wonder whether the dependence on $1/(1-\gamma)$ can be removed completely from bounds on the performance of temporal difference learning (not just in terms of projection on ${\bf 1}^\perp$). We address this next.

%need a discussion of 1/sqrt(t) step-size vs 1/t step-size (latter scales with various parameters).  TD($0$) with a faster-decaying step size, i.e.,$\alpha_t = O(1/t)$ in Appendix and

%Theorem 3(c) in \citep{bhandari2018finite} shows the performance of TD with a faster-decaying step size, i.e., $\alpha_t = 1/(\omega(1-\gamma)(t+1))$. It derives an upper bound for value function that
%\begin{equation}
%    E \left[ \|V_{\theta^*}-V_{\bar{\theta}_T}\|_{D}^2  \right] 
%    \leq   \frac{G^2\left[9+24 \tau ^{\rm mix}\left(1/\sqrt{T}\right)\right] }{T(1-\gamma)^2\omega}(1+\log T).
%\end{equation}
%Comparing the performance of constant step size $1/\sqrt{T}$, the decay is asymptotically faster. However, this bound scales with the smallest eigenvalue of $\Phi^T D \Phi$, i.e., $\omega$ which the previous bound did not. For this reason, the $1/\sqrt{T}$ step size is usually preferred.

\subsection{Mean-adjusted temporal difference learning}\label{sec:3step}
%Section 4.2: The beginning of the section should say what we are going to do in this section. 

Unfortunately, it is easy to see that the dependence on  $1/(1-\gamma)$ in error bounds for temporal difference learning cannot be entirely removed. We next give an informal sketch explaining why this is so. We consider the case where samples $s,s'$ are i.i.d. with probability $\pi_s P(s,s')$ rather than coming from the Markov chain  model, since this only makes the estimation problem easier.

\noindent {\bf Estimating the mean of the value function.}
Let us denote by $V$  the true value function; because it is the fixed point of the Bellman operator, $V =  R + \gamma P V$, we have that $$V = (I-\gamma P)^{-1} R = \left(\sum_{m=0}^{\infty}\gamma^m P^m \right) R. $$ Define $\bar{V} = \pi^T V$; then  \begin{equation} \label{eq:vbar} \bar{V} = \pi^T V = \pi^T\left( \sum_{m=0}^{\infty}\gamma^m P^m \right) R  = \frac{\pi^T R}{1- \gamma}.\end{equation}

Under i.i.d. sampling, what we have are samples from a random variable  $\tilde{R}$ which takes the value  $r(s,s')$ with probability $\pi_s P(s,s')$. From (\ref{eq:vbar}), we have that $(1-\gamma) \bar{V} = E [\tilde{R}]$. 

From $T$ samples of the scalar random variable $\tilde{R}$, the best estimate $\hat{R}_T$ will satisfy $E[(\hat{R}_T - E[\tilde{R}])^2] = \Omega(1/T)$ in the worst-case. This implies that the best estimator $\hat{V}$ of $\bar{V}$ will satisfy $E [ (\hat{V} - \bar{V})^2] = \Omega \left( (1/(1-\gamma)^2)/T \right)$. 

To summarize, the squared error in estimating just the mean of the value function will already scale with $1/(1-\gamma)$. If we consider e.g., $\Phi$ to be the identity matrix, in which case $V_{\theta^*}$ is just equal to the true value function, it easily follows that it is not possible to estimate $V_{\theta^*}$ with error that does not scale with $1/(1-\gamma)$. 

\noindent {\bf A better scaling with the discount factor.} Note, however, that the previous discussion implied that a term like $(1/(1-\gamma)^2)/T$ in a bound on the squared error is unavoidable. But with $1/\sqrt{T}$ step-size, the error will in general decay more slowly as $1/\sqrt{T}$ as in Corollary \ref{thm:bound}. Is it possible to derive a bound where the only scaling with $1/(1-\gamma)$ is in the asymptotically negligible $O(1/T)$ term?

As we show next, this is indeed possible. The algorithm is very natural given the  discussion in the previous subsection: we run projected and averaged TD(0) and estimate the mean of the value function separately, adjusting the outcome of TD(0) to have the right mean in the end.  Building on Corollary \ref{thm:bound}, the idea is that the mean will have expected square error that scales with $(1/(1-\gamma)^2)/T$ while the temporal difference method will estimate the projection onto ${\bf 1}^\perp$ without blowing up as $\gamma \rightarrow 1$. 

The pseudocode of the algorithm is given next as Algorithm 1 and Corollary \ref{thm:3step} bounds its performance. Note that, in contrast to the bounds in the last subsection, Theorem \ref{thm:3step} bounds the error to the true value function $V$ directly. This is a more natural bound for this algorithm which tries to directly match the mean of the true value function. 

\begin{algorithm}[ht] 
\caption{Mean-adjusted TD($0$)}
\begin{algorithmic}[1]
\STATE Initialize $\bar{A}_{-1}=0$, $s_0 = \pi$, and some initial condition $\theta_0$. \\
\FOR {$t=0$ to $T-1$} 
    \STATE Projected TD($0$) update:  \\ $\theta_{t+1}={\rm Proj}_{\Theta}\left(\theta_t + \alpha_t g_t(\theta_t)\right)$
    \STATE Keep track of the average reward: $\bar{A}_t = \frac{t \bar{A}_{t-1}+ r_t}{t+1}$
\ENDFOR
\STATE Set $\hat{V}_T = \frac{\bar{A}_T}{1-\gamma}$
\STATE Output $V'_T = V_{\bar{\theta}_T} + \left( \hat{V}_T - \pi^T V_{\bar{\theta}_T} \right) \bm{1}$
\end{algorithmic}
\end{algorithm}

\begin{corollary} \label{thm:3step}
 Suppose that $(\theta_t)_{t \geq 0}$ and $V_T'$ are generated by Algorithm 1 with step-sizes $\alpha_0 = \dots = \alpha_T = 1/\sqrt{T}$.  Suppose further that $\Theta$ is a convex set that contains $\theta^*$. Let $t_0$ be the largest integer which satisfies $t_0 \leq 2 \tau^{{\rm mix}}\left(\frac{1}{2(t_0+1)} \right)$. Then as long as $T \geq t_0$, we will have \begin{small}
 \begin{equation*} E \left[  \|V'_T-V\|_{D}^2 \right]
\leq   O \left(    \|V_{{\theta}^*}-V\|_{D}^2 + \frac{ r_{\rm max}^2 \tau^{{\rm mix}}\left(\frac{1}{2(T+1)}\right)}{(1-\gamma)^2 T}  
 +   \frac{  \| \theta^*-\theta_{0}\|_2^2 + G^2\left[1+\tau ^{\rm mix}(1/\sqrt{T})\right] }{\sqrt{T}}  \cdot \min \left \{ \frac{r(P)}{\gamma} , \frac{1}{1-\gamma}  \right \} \right),
\end{equation*} \end{small}where $r(P)$ is the inverse spectral gap of the additive reversibilization of the transition matrix $P$ (formally defined in the supplementary information).
\end{corollary}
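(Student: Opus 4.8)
The plan is to split the output error $V'_T - V$ into its component along the all-ones vector $\mathbf{1}$ and its component $D$-orthogonal to $\mathbf{1}$, and to control these two pieces by entirely different means: the $\mathbf{1}$-component (the mean error) by a variance estimate for the running reward average, and the orthogonal component by feeding the bound of Corollary \ref{thm:bound} through a Poincar\'e-type inequality. This is exactly the strategy foreshadowed in the preceding subsection: the Dirichlet seminorm controls the orthogonal part without blowing up as $\gamma\to 1$, while a separate scalar estimate handles the mean.

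First I would record the identities forced by the output rule. Since $\langle \mathbf{1}, W\rangle_D = \pi^T W$ and $\|\mathbf{1}\|_D^2 = \sum_s \pi_s = 1$, every $W\in\mathbb{R}^n$ decomposes $D$-orthogonally as $W = (\pi^T W)\mathbf{1} + W^\perp$ with $\pi^T W^\perp = 0$ and $\|W\|_D^2 = (\pi^T W)^2 + \|W^\perp\|_D^2$. Applying this to $W = V'_T - V$ and using $V'_T = V_{\bar\theta_T} + (\hat V_T - \pi^T V_{\bar\theta_T})\mathbf{1}$, a one-line computation gives $\pi^T(V'_T - V) = \hat V_T - \bar V$ (with $\bar V = \pi^T V$), and a second shows $(V'_T - V)^\perp = (V_{\bar\theta_T} - V)^\perp$. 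Hence
\[ \|V'_T - V\|_D^2 = (\hat V_T - \bar V)^2 + \|(V_{\bar\theta_T} - V)^\perp\|_D^2 ; \]
in words, the mean-adjustment step has deleted the (badly scaling) mean error of plain TD(0) and replaced it with the error of the independent estimator $\hat V_T$.

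For the first term, $\hat V_T = \bar A_T/(1-\gamma)$ where $\bar A_T$ is the running average of the rewards $r_0,\dots,r_{T-1}$ of the stationary chain, whose common mean is $\pi^T R$; by \eqref{eq:vbar}, $E[\hat V_T] = \bar V$, so $E[(\hat V_T - \bar V)^2] = \mathrm{Var}(\bar A_T)/(1-\gamma)^2$. Writing $\mathrm{Var}(\bar A_T) = T^{-2}\sum_{i,j}\mathrm{Cov}(r_i,r_j)$ and using the geometric decay of Markov-chain autocovariances from Assumption \ref{ass:mix} — with the hypothesis $T\ge t_0$ ensuring $T$ exceeds the relevant mixing time — gives $\mathrm{Var}(\bar A_T) = O(r_{\max}^2\,\tau^{\rm mix}(\tfrac{1}{2(T+1)})/T)$, which is the $O(r_{\max}^2\tau^{\rm mix}/((1-\gamma)^2 T))$ term. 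For the second term I would first use the triangle inequality and $(a+b)^2\le 2a^2+2b^2$ to peel off the irreducible approximation error $\|V_{\theta^*}-V\|_D^2$, reducing matters to $E[\|(V_{\bar\theta_T}-V_{\theta^*})^\perp\|_D^2]$. This I would bound two ways. Trivially, the projection has smaller $D$-norm than $V_{\bar\theta_T}-V_{\theta^*}$ itself, so discarding the Dirichlet term in Corollary \ref{thm:bound} gives a bound with factor $1/(1-\gamma)$. Alternatively, the Poincar\'e-type inequality $\|W^\perp\|_D^2 \le r(P)\,\|W\|_{\rm Dir}^2$ — which follows from writing $\|W\|_{\rm Dir}^2 = \langle W, (I-\widetilde P)W\rangle_D$ for $\widetilde P$ the additive reversibilization of $P$, and diagonalizing the $D$-self-adjoint matrix $\widetilde P$, whose top eigenvector is $\mathbf{1}$ — together with the Dirichlet part of Corollary \ref{thm:bound} gives a bound with factor $r(P)/\gamma$. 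Taking the minimum and collecting the three estimates, constants absorbed into $O(\cdot)$, yields the stated inequality.

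The step I expect to be the main obstacle is the variance bound for $\bar A_T$ with the correct $\tau^{\rm mix}$ dependence and the precise role of $t_0$: because $r_t$ is a function of $(s_t,a_t,s_{t+1})$ rather than of the state alone, controlling $\mathrm{Cov}(r_i,r_j)$ requires separating the conditional mean $R(s_t)$, whose autocovariances decay geometrically under Assumption \ref{ass:mix}, from the conditionally independent noise $r_t - R(s_t)$, and then verifying that summing the resulting geometric series reproduces a $\tau^{\rm mix}$ factor and that the whole estimate is in force precisely when $T\ge t_0$. By contrast, the orthogonal decomposition is pure bookkeeping, and the Poincar\'e inequality is standard once the identity $\|W\|_{\rm Dir}^2 = \langle W,(I-\widetilde P)W\rangle_D$ is noted.
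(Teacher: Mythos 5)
Your proposal follows the same skeleton as the paper's proof: the $D$-orthogonal (Pythagorean) decomposition of $V_T'-V$ into its mean and its $\bm{1}^\perp$ components, the identification of the mean error with $(\hat V_T-\bar V)^2$, the Poincar\'e-type inequality $\|W^\perp\|_D^2\le r(P)\|W\|_{\rm Dir}^2$ (the paper's Lemma~\ref{lem:cn}, proved exactly as you sketch via the Dirichlet Laplacian and the additive reversibilization), and the final minimum of an $r(P)/\gamma$ bound and a $1/(1-\gamma)$ bound. You deviate in two places. First, for the mean term the paper proves Lemma~\ref{lem:error_in_mean} by an inductive recursion on $E[(\hat V_t-\bar V)^2]$, conditioning $\tau_t$ steps back to control the cross term; you instead compute $\mathrm{Var}(\bar A_T)$ directly by summing autocovariances. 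That route is legitimate (stationarity from Assumption~\ref{ass:mix} gives $E[\hat V_T]=\bar V$), but to land on a $\tau^{\rm mix}$ factor rather than $m/(1-\rho)$ you must truncate the covariance sum as $\sum_k\min\{r_{\max}^2,\,r_{\max}^2 m\rho^{k-1}\}$ before summing the geometric tail; also note that in your version the hypothesis $T\ge t_0$ is essentially vacuous --- it is an artifact of the paper's recursion, not of the variance computation --- so you should not lean on it. Second, for the $1/(1-\gamma)$ branch the paper abandons the orthogonal decomposition and expands $\|V_T'-V\|_D^2$ directly (its Eq.~(\ref{eq:secondbound})), whereas you stay inside the decomposition and use that the $D$-orthogonal projection onto $\bm{1}^\perp$ is a contraction before invoking the $D$-norm part of Corollary~\ref{thm:bound}; your version is slightly cleaner and gives the same constants up to the $O(\cdot)$. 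Both deviations are sound, so the proposal is correct.
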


The bound of Corollary \ref{thm:3step} has the structure  mentioned earlier: the blowup with $1/(1-\gamma)$ occurs only in the $\widetilde{O}(1/T)$. The last term, which scales like $\widetilde{O}(1/\sqrt{T})$ gets multiplied by the minimum of $1/(1-\gamma)$ and a quantity that depends on the matrix $P$, so that it does not blow up as $\gamma \rightarrow 1$.

Note that, unlike the previous bounds discussed in this paper, this bound depends on an eigenvalue gap associated with the matrix $P$. However, this dependence is in such a way that it only helps: when $1/(1-\gamma)$ is small, there is no dependence on the eigenvalue gap, and it is only when $\gamma \rightarrow 1$ that performance ``saturates'' at something depending on an eigenvalue gap.

\section{Conclusion}

We have given a new interpretation of temporal difference learning in terms of a splitting of gradient descent. As a consequence of this interpretation, analyses of gradient descent can apply to temporal difference learning almost verbatim. 

We have exploited this interpretation to observe that temporal difference methods learn the projection of the value function onto ${\bf 1}^\perp$ without any blowup as $\gamma \rightarrow 1$; by contrast, previous work tended to have error bounds that scaled with $1/(1-\gamma)$. While, as we explain, it is not possible to remove the dependence on $O(1/(1-\gamma))$ in general, we provide an error bound for a simple modification to TD(0) where the only dependence on $1/(1-\gamma)$ is in the asymptotically negligible term. 

An open problem might be to improve the scaling of the bounds we have obtained in this paper with $P$. Our focus has been on scaling with $1/(1-\gamma)$ but one could further ask what dependence on the transition matrix $P$ is optimal. It is natural to wonder, for example, whether the $r(P)$ factor in the last term of Corollary, \ref{thm:3step} measuring how the performance ``saturates'' as $\gamma \rightarrow 1$,  could be improved. Typically, error bounds in this setting scale with the (non-squared) spectral gap of $P$, which can be much smaller than $r(P)$. 

%More broadly, if temporal difference learning is a splitting of gradient descent, this opens up several possibilities for future work. For example, one might wonder whether there are other, more attractive, ways to split gradient descent amenable to a bootstrapped interpretation.  Alternatively, instead of splitting gradient descent, one might attempt to split mirror descent whenever there are further constraints on the parameter $\theta$ to obtain improved performance. 

%Additionally, one might study the regret of temporal difference learning in response to policy changes, which can be done by adopting the existing...

%One might further apply this to Q-learning, which can be thought of as a variant of temporal difference learning with a particular evolving choice of policy which is greedy or approximately greedy with respect to the estimated value functions. 

\bibliography{reference}
\bibliographystyle{apalike}
\newpage
\appendix

\section{Proofs of Theorems \ref{thm:split} and \ref{thm:lambda_iden}}

In this section, we present the detailed proof of the main results of our paper, i.e., Theorem \ref{thm:split} and \ref{thm:lambda_iden}. We begin with the proof of Theorem \ref{thm:split}.

Before the proof, we introduce some necessary notation. Let $\phi$ be the feature vector of a random state generated according to the stationary distribution $\pi$. In other words, $\phi = \phi(s_k)$ with probability $\pi_{s_k}$. Let $\phi'$ be the feature vector of the next state $s'$ and let $r=r(s,s')$. Thus $\phi \text{ and }\phi'$ are random vectors and $r$ is a random variable. As shown in Equation (2) of \citet{bhandari2018finite}, $\bar{g}(\theta)$ can be written as $$\bar{g}(\theta) = E\left[ \phi r \right] +  E\left[\phi (\gamma \phi'-\phi)^T\right]\theta.$$ With these notations in place, we begin the proof of  Theorem \ref{thm:split}.

\begin{proof}[Proof of Theorem \ref{thm:split}]
Recall that $\theta^*$ is the unique vector with $\bar{g}(\theta^*)=0$ (see  Lemma 6 in \citep{tsitsiklis1997analysis}). Consider
\begin{equation} \label{eq:barg}
    \bar{g}(\theta) =  \bar{g}(\theta) -  \bar{g}(\theta^*)= E\left[\phi (\gamma \phi'-\phi)^T \right](\theta-\theta^*).
\end{equation}
To conclude that $\bar{g}(\theta)$ is a splitting of the gradient for a quadratic form $f(\theta)$, we need to calculate the gradient of $f(\theta)$. Let us begin with the Dirichlet norm and perform the following sequence of manipulations:

\begin{align}
\|V_{\theta}-V_{\theta^*}\|_{{\rm Dir}}^2 =& \frac{1}{2} \sum_{s,s' \in \mathcal{S}} \pi(s)P(s,s')\left[V_{\theta^*}(s)-V_{\theta}(s)-V_{\theta^*}(s')+V_{\theta}(s')\right]^2 \nonumber \\
=&  \frac{1}{2} \sum_{s,s' \in \mathcal{S}} \pi(s)P(s,s')\left [\left(V_{\theta^*}(s)-V_{\theta}(s)\right)^2 + \left(V_{\theta^*}(s')-V_{\theta}(s')\right)^2\right] \nonumber \\ 
& -\sum_{s,s' \in \mathcal{S}} \pi(s)P(s,s') \left(V_{\theta^*}(s)-V_{\theta}(s)\right)\left(V_{\theta^*}(s')-V_{\theta}(s')\right) \nonumber \\
= & \frac{1}{2} \sum_{s \in \mathcal{S}} \pi(s)\left(\sum_{s' \in \mathcal{S}}P(s,s') \right)(V_{\theta^*}(s)-V_{\theta}(s))^2 \nonumber\\
&+  \frac{1}{2} \sum_{s' \in \mathcal{S}} \left(\sum_{s \in \mathcal{S}} \pi(s)P(s,s') \right)(V_{\theta^*}(s')-V_{\theta}(s'))^2 \nonumber\\
& -\sum_{s,s' \in \mathcal{S}} \pi(s)P(s,s') (\theta - \theta^*)^T \phi(s) \phi(s')^T (\theta - \theta^*) \nonumber \\
=&  \frac{1}{2} \sum_{s \in \mathcal{S}} \pi(s) \left(V_{\theta}(s)-V_{\theta^*}(s) \right)^2 + \frac{1}{2} \sum_{s' \in \mathcal{S}} \phi(s')\left(V_{\theta}(s')-V_{\theta^*}(s')\right)^2 \nonumber \\
& -(\theta - \theta^*)^T E\left[ \phi \phi'^T \right] (\theta - \theta^*) \nonumber \\
= & \|V_{\theta}-V_{\theta^*}\|_{D}^2-(\theta - \theta^*)^T E\left[\phi \phi'^T\right](\theta - \theta^*). \label{eq:diridentity}
\end{align} 

In the above sequence of equations, the first equality is just the definition of Dirichlet semi-norm; the second equality follows by expanding the square; the third equality follows by interchanging sums and the definition of $V_{\theta}$; the fourth equality uses that $\pi$ is a stationary distribution of $P$, as well as the definition of $\phi$ and $\phi'$; and the final equality uses the definition of the $||\cdot||_D$ norm.

Our next step is to use the identity we have just derived to rearrange the definition of  $\|V_{\theta}-V_{\theta^*}\|_{D}^2$:

\vfill
\newpage

\begin{align}
\|V_{\theta}-V_{\theta^*}\|_{D}^2 & =  (V_{\theta}-V_{\theta^*})^T D (V_{\theta}-V_{\theta^*}) = (\theta - \theta^*)^T \Phi^T D \Phi (\theta - \theta^*) \nonumber \\
& = (\theta - \theta^*)^T \sum_{s \in \mathcal{S}} \pi(s) \phi(s) \phi(s)^T (\theta - \theta^*) \nonumber \\
& = (\theta - \theta^*)^T E[ \phi \phi^T] (\theta - \theta^*).
\label{eq:dnorm} \end{align}

We now use these identities to write down  a new expression for the function  $f(\theta)$:
\begin{eqnarray*}
    f(\theta) & = & (1-\gamma) ||V_{\theta} - V_{\theta^*}||_{\rm D}^2 + \gamma ||V_{\theta} - V_{\theta^*}||_{\rm Dir}^2 \\ 
    & = & (1-\gamma) ||V_{\theta} - V_{\theta^*}||_{\rm D}^2 + \gamma \left(||V_{\theta} - V_{\theta^*}||_{\rm D}^2 - (\theta - \theta^*)^T E\left[\phi \phi'^T\right](\theta - \theta^*) \right) \\ 
    & = & ||V_{\theta} - V_{\theta^*}||_{\rm D}^2 - \gamma  (\theta - \theta^*)^T E\left[\phi \phi'^T\right](\theta - \theta^*)  \\ 
    & = & (\theta - \theta^*)^T E[ \phi \phi^T] (\theta - \theta^*) - \gamma  (\theta - \theta^*)^T E\left[\phi \phi'^T\right](\theta - \theta^*)  \\ 
    & = &  (\theta - \theta^*)^T E\left[\phi (\phi-\gamma \phi')^T \right] (\theta - \theta^*).  
\end{eqnarray*} 
In the above sequence of equations, the first equality is just the definition of the two norms; the second equality is obtained by plugging in (\ref{eq:diridentity}); the third equality is obtained by cancellation of terms; the fourth equality is obtained by plugging in (\ref{eq:dnorm}); and the last step follows by merging the two terms together.

As a consequence of writing $f(\theta)$ this way, we can write down a new expression for the gradient of $f(\theta)$ directly: 
\begin{equation}\label{eq:f_gradient}
    \triangledown f(\theta) = \left( E\left[\phi (\phi-\gamma \phi')^T  \right] + E \left[(\phi-\gamma \phi') \phi^T \right] \right) (\theta - \theta^*).
\end{equation}
Combining (\ref{eq:barg}) and (\ref{eq:f_gradient}), it is immediately that $-\bar{g}(\theta)$ is a splitting of $\triangledown f(\theta)$.
\end{proof}

We next turn to the proof of Theorem  \ref{thm:lambda_iden}. Before beginning the proof, we introduce some notations. 

The operator $T^{(\lambda)}$ is defined as:
\begin{equation}\label{eq:td_lam_op}
    \left(T^{(\lambda)} J\right)(s) = (1-\lambda) \sum_{m=0}^{\infty} \lambda^m E \left[ \sum_{t=0}^{m} \gamma^t r(s_t,s_{t+1})+\gamma^{m+1} J(s_{m+1})| s_0 = s \right]
\end{equation}for vectors $J \in \mathbb{R}^n$. The expectation is taken over sample paths taken by following actions according to policy $\mu$; recalling that this results in the transition matrix $P$, we can write this as \begin{equation}
    T^{(\lambda)} J = (1-\lambda) \sum_{m=0}^{\infty} \lambda^m \sum_{t=0}^{m} \gamma^t P^t R + (1-\lambda) \sum_{m=0}^{\infty} \lambda^m \gamma^{m+1} P^{m+1} J. \label{eq:tlambda}
\end{equation}

We next devise new notation that is analogous to the TD(0) case. Let us denote the quantity $\delta_t z_t$ by $x(\theta_t,z_t)$ and its steady-state mean by $\bar{x}(\theta)$. It is known that
\begin{equation}\label{eq:xbar}
    \bar{x}(\theta) = \Phi ^T D \left(T^{(\lambda)} (\Phi \theta) -\Phi \theta\right),
\end{equation} see Lemma 8 of \citet{tsitsiklis1997analysis}%, where the TD($\lambda$) operator $T^{(\lambda)}$ is defined in (\ref{eq:td_lam_op})
; it also shown there that TD($\lambda$) converges to a unique fixed point of a certain Bellman equation which we'll denote by $\theta_{\lambda}^*$, and which satisfies 
\begin{equation}\label{eq:xbar^*}
    \bar{x}(\theta_{\lambda}^*)=0.
\end{equation}

With these preliminaries in place, we can begin the proof. 

\begin{proof}[Proof of Theorem \ref{thm:lambda_iden}]
By the properties of $T^{(\lambda)}$ and $\bar{x}(\theta)$ given in (\ref{eq:xbar}) and (\ref{eq:td_lam_op}) respectively, we have
\begin{align}
   \bar{x}(\theta) &= \bar{x}(\theta) - \bar{x}(\theta_{\lambda}^*) \nonumber  \\
   &= \Phi^T D \left( T^{(\lambda)} \left( \Phi \theta \right) - \Phi \theta \right) - \Phi^T D \left( T^{(\lambda)} \left( \Phi \theta_{\lambda}^* \right) - \Phi \theta_{\lambda}^* \right) \nonumber  \\
   & =  \Phi^T D \left(T^{(\lambda)} (\Phi \theta) - T^{(\lambda)} (\Phi \theta_{\lambda}^*) -\Phi (\theta-\theta_{\lambda}^*)\right) \nonumber  \\
   & = \left[ (1-\lambda) \sum_{m=0}^{\infty} \lambda^m \gamma^{m+1} \Phi ^T D P^{m+1} \Phi - \Phi ^T D \Phi \right] (\theta - \theta_{\lambda}^*), \label{eq:tdoptdiff}
\end{align} where the last line used (\ref{eq:tlambda}).

Our next step is to derive a convenient expression for $f^{(\lambda)}(\theta)$. We begin by finding a clean expression for the Dirichlet form that appears in the definition of  $f^{(\lambda)}(\theta)$:
\begin{align}
      \|V_{\theta} -V_{\theta_{\lambda}^*} \|_{ {\rm Dir}, m+1}^2 
    = &  \frac{1}{2} \sum_{s,s' \in \mathcal{S}} \pi_s P^{m+1}(s,s')(V_{\theta}(s) -V_{\theta_{\lambda}^*}(s)-V_{\theta}(s') +V_{\theta_{\lambda}^*}(s'))^2  \nonumber \\
   =& \frac{1}{2} \sum_{s,s' \in \mathcal{S}}  \pi_s P^{m+1}(s,s') \left[ (V_{\theta}(s) -V_{\theta_{\lambda}^*}(s))^2 + (V_{\theta}(s') -V_{\theta_{\lambda}^*}(s'))^2\right] \nonumber  \\
   & - \sum_{s,s' \in \mathcal{S}}  \pi_s P^{m+1}(s,s')\left(V_{\theta}(s) -V_{\theta_{\lambda}^*}(s)\right)\left(V_{\theta}(s') -V_{\theta_{\lambda}^*}(s')\right)  \nonumber  \\
   =& \frac{1}{2} \sum_{s \in \mathcal{S}} \pi_s \left(\sum_{s' \in \mathcal{S}} P^{m+1}(s,s')\right) (V_{\theta}(s) -V_{\theta_{\lambda}^*}(s))^2 + \frac{1}{2} \sum_{s' \in \mathcal{S}}   \left(\sum_{s \in \mathcal{S}} \pi_s P^{m+1}(s,s')\right)(V_{\theta}(s') -V_{\theta_{\lambda}^*}(s'))^2 \nonumber  \\
   & - \sum_{s,s' \in \mathcal{S}}  \pi_s P^{m+1}(s,s')\left(V_{\theta}(s) -V_{\theta_{\lambda}^*}(s)\right)\left(V_{\theta}(s') -V_{\theta_{\lambda}^*}(s')\right) \nonumber  \\
   =& \frac{1}{2} \sum_{s \in \mathcal{S}}  \pi_s (V_{\theta}(s) -V_{\theta_{\lambda}^*}(s))^2+ \frac{1}{2} \sum_{s' \in \mathcal{S}}   \pi_{s'} (V_{\theta}(s') -V_{\theta_{\lambda}^*}(s'))^2 \nonumber  \\
   & - \sum_{s,s' \in \mathcal{S}}  \pi_s P^{m+1}(s,s')\left(V_{\theta}(s) -V_{\theta_{\lambda}^*}(s)\right)\left(V_{\theta}(s') -V_{\theta_{\lambda}^*}(s')\right) \nonumber  \\
   =&   \sum_{s \in \mathcal{S}} \pi_s (V_{\theta}(s) -V_{\theta_{\lambda}^*}(s))^2
    -   \sum_{s \in \mathcal{S}} \pi_s \left(V_{\theta}(s) -V_{\theta_{\lambda}^*}(s)\right)
   \sum_{s' \in \mathcal{S}} P^{m+1}(s,s')
    \left(V_{\theta}(s') -V_{\theta_{\lambda}^*}(s')\right) \nonumber  \\
   =&  (\theta - \theta_{\lambda}^*)^T  \left(\Phi^T D \Phi -  \Phi^T D P^{m+1} \Phi \right) (\theta - \theta_{\lambda}^*). \label{eq:direxpr}
\end{align}

In the above sequence of equations, the first equality follows by the definition of the $m+1$-Dirichlet norm; the second equality follows by expanding the square; the third equality follows by interchanging the order of summations; the fourth equality uses that any power of a stochastic matrix is stochastic, and the $\pi P^{m+1} = \pi$; the fifth equality combines terms and rearranges the order of summation; and the last line uses the definition $V_{\theta} = \Phi \theta$. 

We'll also make use of the obvious identity \begin{equation} \label{eq:vthetadef} \| V_{\theta} - V_{\theta_{\lambda}^*} \|_{D}^2 = ({\theta} -{\theta_{\lambda}^*})^T \Phi^T D \Phi ({\theta} -{\theta_{\lambda}^*}).\end{equation} 

Putting all this together, we can express the function  $f^{(\lambda)}(\theta)$ as:
\begin{small}
\begin{align*}
    f^{(\lambda)}(\theta) &=   
    (1-\gamma \kappa) ||V_{\theta} - V_{\theta_{\lambda}^*}||_D^2 + (1-\lambda) \sum_{m=0}^{+\infty} \lambda^m \gamma^{m+1} ||V_{\theta} - V_{\theta_{\lambda}^*}||_{\rm Dir, m+1}^2 \\ 
    & = ({\theta} -{\theta_{\lambda}^*})^T \left[(1- \gamma \kappa) \Phi^T D \Phi + (1-\lambda)\sum_{m=0}^{\infty} \lambda^m \gamma^{m+1} \Phi^T D (I- P^{m+1}) \Phi  \right]({\theta} -{\theta_{\lambda}^*}) \\
    & = ({\theta} -{\theta_{\lambda}^*})^T \left[ \left((1- \gamma \kappa) +(1-\lambda)\sum_{m=0}^{\infty} \lambda^m \gamma^{m+1}\right)  \Phi^T D \Phi - (1-\lambda)\sum_{m=0}^{\infty} \lambda^m \gamma^{m+1} \Phi^T D
    P^{m+1} \Phi  \right]({\theta} -{\theta_{\lambda}^*})\\
    & = ({\theta} -{\theta_{\lambda}^*})^T \left[ \left((1- \gamma \kappa) + \gamma \frac{1-\lambda}{1-\gamma \lambda}\right)  \Phi^T D \Phi - (1-\lambda)\sum_{m=0}^{\infty} \lambda^m \gamma^{m+1} \Phi^T D
    P^{m+1} \Phi  \right]({\theta} -{\theta_{\lambda}^*})\\
    & = ({\theta} -{\theta_{\lambda}^*})^T \left[ \Phi^T D \Phi - (1-\lambda)\sum_{m=0}^{\infty} \lambda^m \gamma^{m+1} \Phi^T D
    P^{m+1} \Phi  \right]({\theta} -{\theta_{\lambda}^*}).
\end{align*}
\end{small}

In the above sequence of equations, the first equality is from the definition of the function $f^{(\lambda)}(\theta)$; the second line comes from plugging in (\ref{eq:vthetadef}) and (\ref{eq:direxpr}); the third equality from breaking the sum in the second term into two pieces, one of which is then absorbed into the first term; the fourth equality follows by using the sum of a geometric series; and the last equality by the definition of $\kappa$ from the theorem statement, which, recall, is $\kappa = (1-\lambda)/(1- \gamma \lambda)$. 

By comparing the expression for $f^{(\lambda)}(\theta)$ we have just derived to (\ref{eq:tdoptdiff}), it is immediate that $-\bar{x}(\theta)$ is a splitting of the gradient of $f^{(\lambda)}(\theta)$.
\end{proof}

\section{Proof of Corollary \ref{thm:bound}}
We will be using standard notation for the mixing time of the Markov Chain: 
\begin{equation}\label{eq:taumix}
    \tau^{{\rm mix}}(\epsilon) = \min \left \{t \in \mathbb{N}, t \geq 1 |  m \rho^t \leq \epsilon \right\}.
\end{equation}We will find it convenient to use several observations made in \cite{bhandari2018finite}. First, Lemma 6 of that paper says that, under  the assumptions of Corollary \ref{thm:bound}, we have that
\begin{equation}
    \|g_t(\theta_t)\|_2 \leq G = r_{\rm max}+2R. \label{eq:gbound}
\end{equation} This holds with probability one; note, however, that because the number of states and actions is finite, this just means one takes the maximum over all states and actions to obtain this upper bound. 

A second lemma from \cite{bhandari2018finite} deals with a measure of ``gradient bias,'' the quantity $\zeta_t (\theta) = (\bar{g}(\theta) - g_t(\theta))^T (\theta^*-\theta)$. As should be unsurprising, what matters in the analysis is not the natural measure of gradient bias, e.g., $\bar{g}(\theta) - g_t(\theta)$, but rather how the angle with the direction to the optimal solution is affected, which is precisely what is measured by $\zeta_t(\theta)$. We have the following upper bound.  
\begin{lemma}[Lemma 11 in \cite{bhandari2018finite}]
Consider a non-increasing step-size sequence, $\alpha_0 \geq \alpha_1 \geq \cdots \geq \alpha_T$. Fix any $t<T$, and set $t^* = \max \{0 , t-\tau^{{\rm mix}}(\alpha_T) \}$. Then 
\begin{equation*}
    E[\zeta_t(\theta_t)] \leq G^2 \left(4 + 6 \tau^{{\rm mix}}(\alpha_T)\right)\alpha_{t^*}.
\end{equation*} \label{lemm:bh}
\end{lemma}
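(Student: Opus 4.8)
The plan is to prove the bound by the standard \emph{backward-mixing} comparison: compare the iterate $\theta_t$ against a reference iterate $\theta_{t^*}$ from $\tau^{\rm mix}(\alpha_T)$ steps in the past, chosen precisely so that the chain has had time to mix after $t^*$. Writing $\mathcal{F}_k = \sigma(s_0,\dots,s_k)$ for the natural filtration, the point is that $\theta_{t^*}$ is $\mathcal{F}_{t^*}$-measurable while the randomness in $g_t(\cdot)$ enters through the transition $(s_t,s_{t+1})$, which lies $t-t^*$ steps ahead. I would start from the decomposition
\[
\zeta_t(\theta_t) = \big(\zeta_t(\theta_t) - \zeta_t(\theta_{t^*})\big) + \zeta_t(\theta_{t^*}),
\]
bounding the ``drift'' term and the ``mixing'' term separately, and then taking expectations.

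For the drift term I would first note that the iterates move slowly: since ${\rm Proj}_\Theta$ is nonexpansive and fixes $\theta_k \in \Theta$, the bound (\ref{eq:gbound}) gives $\|\theta_{k+1}-\theta_k\| \le \alpha_k \|g_k(\theta_k)\| \le \alpha_k G$, so summing over $k\in\{t^*,\dots,t-1\}$ and using that the step sizes are non-increasing yields $\|\theta_t - \theta_{t^*}\| \le G\sum_{k=t^*}^{t-1}\alpha_k \le G\,\tau^{\rm mix}(\alpha_T)\,\alpha_{t^*}$. Next, writing $g_t(\theta) = b_t + A_t\theta$ with $A_t = \phi(s_t)(\gamma\phi(s_t')-\phi(s_t))^T$, Assumption \ref{ass:features} gives $\|A_t\|\le 2$, so both $g_t$ and $\bar g$ are $2$-Lipschitz in $\theta$ and $h_t := \bar g - g_t$ is $4$-Lipschitz. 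Combining this with $\|h_t(\theta)\|\le 2G$ and $\|\theta^*-\theta\|\le 2R$ on $\Theta$, a product-rule expansion of $\zeta_t(\theta)=h_t(\theta)^T(\theta^*-\theta)$ gives $|\zeta_t(\theta_t)-\zeta_t(\theta_{t^*})| \le (8R+2G)\|\theta_t-\theta_{t^*}\| \le 6G\|\theta_t-\theta_{t^*}\|$, where the last step uses $8R\le 4G$ (since $G = r_{\rm max}+2R\ge 2R$). Multiplying by the drift bound produces the contribution $6G^2\,\tau^{\rm mix}(\alpha_T)\,\alpha_{t^*}$, accounting for the $6\tau^{\rm mix}$ term.

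For the mixing term I would condition on $\mathcal{F}_{t^*}$. Since $\theta_{t^*}$ is then fixed, $E[\zeta_t(\theta_{t^*})\mid\mathcal{F}_{t^*}] = (\theta^*-\theta_{t^*})^T\big(\bar g(\theta_{t^*}) - E[g_t(\theta_{t^*})\mid\mathcal{F}_{t^*}]\big)$. The conditional law of $s_t$ given $\mathcal{F}_{t^*}$ is $P^{t-t^*}(s_{t^*},\cdot)$, and because the subsequent one-step transition to $s_{t+1}$ uses the \emph{same} kernel $P$ that appears in the stationary average $\bar g$, the discrepancy between $E[g_t(\theta_{t^*})\mid\mathcal{F}_{t^*}]$ and $\bar g(\theta_{t^*})$ depends only on the marginal of $s_t$. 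Writing each as a $P^{t-t^*}$- versus $\pi$-average of the vector field $F(s)=E[g_t(\theta_{t^*})\mid s_t=s]$, which is bounded by $G$ via (\ref{eq:gbound}), gives $\|\bar g(\theta_{t^*}) - E[g_t(\theta_{t^*})\mid\mathcal{F}_{t^*}]\| \le 2G\,d_{\rm TV}\big(P^{t-t^*}(s_{t^*},\cdot),\pi\big)$. When $t^* = t-\tau^{\rm mix}(\alpha_T)$ this distance is at most $m\rho^{\tau^{\rm mix}(\alpha_T)}\le \alpha_T$ by Assumption \ref{ass:mix} and the definition (\ref{eq:taumix}); when $t^*=0$ the stationary-start part of Assumption \ref{ass:mix} makes $s_t$ exactly $\pi$-distributed, so the distance is zero. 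Combining with $\|\theta^*-\theta_{t^*}\|\le 2R$ gives $E[\zeta_t(\theta_{t^*})] \le 4RG\,\alpha_T \le 4G^2\alpha_{t^*}$, using $R\le G$ and $\alpha_T\le\alpha_{t^*}$, which yields the additive $4$.

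Adding the two contributions gives $E[\zeta_t(\theta_t)] \le G^2\big(4+6\tau^{\rm mix}(\alpha_T)\big)\alpha_{t^*}$, as claimed. I expect the mixing term to be the main obstacle: one must argue carefully that \emph{only} the marginal of $s_t$ — not the one-step transition $s_t\to s_{t+1}$ — needs to be close to stationary, and one must handle the boundary case $t^*=0$, which is exactly where the assumption that the chain starts in steady state earns its keep. The drift term, by contrast, reduces to a deterministic Lipschitz-times-displacement estimate once the $O(G)$ bounds on $g_t$, $\bar g$ and their slopes are in hand.
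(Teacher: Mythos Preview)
The paper does not prove this lemma at all: it is quoted verbatim as Lemma~11 of \cite{bhandari2018finite} and used as a black box in the proof of Corollary~\ref{thm:bound}. So there is no in-paper proof to compare against. Your sketch is the standard backward-mixing argument from \cite{bhandari2018finite} itself---decompose $\zeta_t(\theta_t)$ into a drift term $\zeta_t(\theta_t)-\zeta_t(\theta_{t^*})$ controlled by Lipschitzness of $\theta\mapsto\zeta_t(\theta)$ and the step-size bound on $\|\theta_t-\theta_{t^*}\|$, plus a mixing term $\zeta_t(\theta_{t^*})$ controlled by the total-variation distance of the $(t-t^*)$-step kernel to $\pi$---and the constants line up correctly with the stated bound.

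One wording issue worth tightening: in the $t^*=0$ case you write that the conditional TV distance ``is zero'' because of the stationary start. That is not literally true---conditioning on $\mathcal{F}_0=\sigma(s_0)$ pins down $s_0$, and $d_{\rm TV}(P^t(s_0,\cdot),\pi)$ need not vanish. The correct (and simpler) argument is that $\theta_0$ is deterministic, so no conditioning is needed: one computes $E[\zeta_t(\theta_0)]=(\theta^*-\theta_0)^T\big(\bar g(\theta_0)-E[g_t(\theta_0)]\big)$ directly, and stationarity of the chain gives $E[g_t(\theta_0)]=\bar g(\theta_0)$, hence $E[\zeta_t(\theta_0)]=0$. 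This yields the same conclusion, so it is not a genuine gap, but the phrasing as written is inaccurate.
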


With these preliminaries in place, we are now ready to prove the corollary. The proof follows the steps of \cite{sun2018markov} to analyze Markov gradient descent, using the fact that the gradient splitting has the same inner product with the direction to the optimal solution as the gradient. 

\begin{proof}[Proof of Corollary \ref{thm:bound}]
%Let $\zeta_t (\theta) = (\bar{g}(\theta) - g_t(\theta))^T (\theta^*-\theta)$.
From the projected TD(0) recursion, for any $t$,
\begin{align}
\| \theta^*-\theta_{t+1}\|_2^2 
& =    \| \theta^*-{\rm Proj}_{\Theta_R}\left(\theta_t+\alpha_t g_t(\theta_t)\right) \|_2^2 \notag \\
& \leq \| \theta^*-\theta_t - \alpha_t g_t(\theta_t) \|_2^2 \notag\\
& =  \| \theta^*-\theta_{t}\|_2^2 - 2 \alpha_t {g}_t(\theta_t)^T(\theta^*-\theta_{t})+ \alpha_t^2 \|g_t(\theta_t)\|_2^2 \notag\\
&=     \| \theta^*-\theta_{t}\|_2^2 - 2 \alpha_t \left[  \bar{g}(\theta_t)^T -  (\bar{g}(\theta) - g_t(\theta))^T  \right]  (\theta^* - \theta)+ \alpha_t^2 \|g_t(\theta_t)\|_2^2 \notag\\
&=     \| \theta^*-\theta_{t}\|_2^2 - 2 \alpha_t \bar{g}(\theta_t)^T(\theta^*-\theta_{t}) + 2 \alpha_t \zeta_t (\theta_t) + \alpha_t^2 \|g_t(\theta_t)\|_2^2 \notag\\
& \leq \| \theta^*-\theta_{t}\|_2^2 - 2 \alpha_t \bar{g}(\theta_t)^T(\theta^*-\theta_{t}) + 2 \alpha_t \zeta_t (\theta_t) + \alpha_t^2 G^2.\notag
\end{align}
In the above sequence of equations, all the equalities are just rearrangements of terms; whereas the first inequality follows that the projection onto a convex set does not increase distance, while the second inequality follows by (\ref{eq:gbound}). 

Next we use  Corollary \ref{cor:identity},  rearrange terms, and sum  from $t=0$ to $t=T-1$:
\begin{align*}
     &\sum_{t=0}^{T-1} 2 \alpha_t E \left [(1-\gamma)\|V_{\theta^*}-V_{\theta_t}\|_{D}^2 + \gamma \|V_{\theta^*}-V_{\theta_t}\|_{{\rm Dir}}^2 \right ] \\
\leq & \sum_{t=0}^{T-1} \left( E\left[\| \theta^*-\theta_{t}\|_2^2\right] - E\left[\| \theta^*-\theta_{t+1}\|_2^2\right] \right) + \sum_{t=0}^{T-1}  2 \alpha_t E\left[\zeta_t (\theta_t)\right] + \sum_{t=0}^{T-1} \alpha_t^2 G^2\\
=    & \left(\| \theta^*-\theta_{0}\|_2^2 - E\left[\| \theta^*-\theta_{T}\|_2^2\right] \right) + \sum_{t=0}^{T-1}  2 \alpha_t E\left[\zeta_t (\theta_t)\right] + \sum_{t=0}^{T-1} \alpha_t^2 G^2\\
\leq & \| \theta^*-\theta_{0}\|_2^2+ \sum_{t=0}^{T-1}  2 \alpha_t E\left[\zeta_t (\theta_t)\right] + \sum_{t=0}^{T-1} \alpha_t^2 G^2.
\end{align*}
Now plugging in the step-sizes  $\alpha_0 = \dots = \alpha_T = 1/\sqrt{T}$, it is immediate that
\begin{small}
\begin{equation*}
     \sum_{t=0}^{T-1} E \left [(1-\gamma)\|V_{\theta^*}-V_{\theta_t}\|_{D}^2 + \gamma \|V_{\theta^*}-V_{\theta_t}\|_{{\rm Dir}}^2 \right ]
\leq  \frac{\sqrt{T}}{2} \left( \| \theta^*-\theta_{0}\|_2^2 +G^2 \right) + \sum_{t=0}^{T-1}  E\left[\zeta_t (\theta_t)\right].
\end{equation*}
\end{small}

Using Lemma \ref{lemm:bh}, have that 
\begin{align}
    \sum_{t=0}^{T-1}  E\left[\zeta_t (\theta_t)\right] & \leq \sum_{t=0}^{T-1} G^2 \left(4 + 6 \tau^{{\rm mix}}(\alpha_T)\right)\alpha_{t^*} 
    \nonumber \\
    & = \sqrt{T} G^2\left(4+6 \tau ^{\rm mix}\left(1/\sqrt{T}\right)\right). \label{eq:zeta}
\end{align} Putting all this together and using the convexity of the function $f(\theta)$, we can bound the error at the average iterate as:
\begin{align*}
      E \left[ (1-\gamma)\|V_{\theta^*}-V_{\bar{\theta}_T}\|_{D}^2 + \gamma \|V_{\theta^*}-V_{\bar{\theta}_T}\|_{{\rm Dir}}^2 \right]
\leq & \frac{1}{T} \sum_{t=0}^{T-1} E \left [(1-\gamma)\|V_{\theta^*}-V_{\theta_t}\|_{D}^2 + \gamma \|V_{\theta^*}-V_{\theta_t}\|_{{\rm Dir}}^2 \right ]\\
\leq & \frac{ \| \theta^*-\theta_{0}\|_2^2 +G^2  }{2\sqrt{T}}  + \frac{G^2\left(4+6 \tau ^{\rm mix}\left(1/\sqrt{T}\right)\right)}{\sqrt{T}}\\
=    & \frac{ \| \theta^*-\theta_{0}\|_2^2 + G^2\left(9 + 12\tau ^{\rm mix}\left(1/\sqrt{T}\right) \right)  }{2\sqrt{T}}.
\end{align*}
\end{proof}

\section{Proof of Corollary \ref{thm:3step}} 
Before starting the proof, we will need a collection of definitions, observations, and preliminary lemmas. We organize these into subheadings below.

\noindent {\bf The Dirichlet Laplacian.} Let $L = (L(i,j))_{n \times n}$ be a symmetric matrix in $\mathbb{R}^{n \times n}$ defined as 
$$L(i,j) =\left\{\begin{matrix}
  - (1/2) \left(\pi_i P(i,j) + \pi_{j}P(j,i)\right) & \text{if } i \neq j\\ 
  \sum_{i' \neq i}|L(i,i')| &  \text{if } i=j
\end{matrix}\right. .$$
It is immediate that the diagonal elements of $L$ are positive and its rows sum to zero. 

Furthermore, it can be shown that for any vector $x$, we have that $\| x \|_{\rm Dir}^2 =  x^T L x $. Indeed: 
\begin{align*}
    x^T L x & = \sum_{i =1}^n \left [\sum_{j \neq i} -\frac{1}{2} \left(\pi_i P(i,j) + \pi_{j}P(j,i)\right) x(i) x(j) +  \left(\sum_{j \neq i} \frac{1}{2} \left(\pi_i P(i,j) + \pi_{j}P(j,i)\right) \right) x(i)^2 \right]\\
    & = \sum_{i < j} \frac{1}{2} \left(\pi_i P(i,j) + \pi_{j}P(j,i)\right) (x(i)-x(j))^2 \\
    & =\frac{1}{2} \sum_{i,j \in [n]}  \frac{1}{2}  \left(\pi_i P(i,j) + \pi_{j}P(j,i)\right) (x(i)-x(j))^2 \\
    & =\frac{1}{2} \sum_{i,j \in [n]} \pi_i P(i,j)  (x(i)-x(j))^2 = \| x \|_{\rm Dir}^2.
\end{align*}

\noindent {\bf Connection to the reversed chain.} We remark that the matrix $L$ is connected to the so-called ``additive reversibilization'' of the matrix $P$, which we explain next. For a stochastic matrix $P$ with stationary distribution $\pi$, it is natural to define the matrix $P^*$ as 
\[ [P^*]_{ij} = \frac{\pi(j)}{\pi(i)} P_{ji}. \] It is possible to verify that the matrix $P^*$ has the same stationary distribution as the matrix $P$ (see \citet{aldous1995reversible}). Intuitively, the equality 
\[ \pi(i) [P^*]_{ij} = \pi(j) P_{ji},\] means that it is natural to interpret $P^*$ as the ``reversed'' chain of $P$: for all pairs $i,j$, the link from $i$ to $j$ is traversed as often under the stationary distribution in $P^*$ as the link from $j$ to $i$ in $P$. 

It can then be shown that the matrix $Q = (P+P^*)/2$ is reversible (see \cite{aldous1995reversible}); this matrix is called the ``additive reversibilization'' of the matrix $P$. It is easy to see that $Q = I - D^{-1} L$; indeed, both the leg-hand side and the right-hand side have the same off-diagonal entries and have rows that sum to one. Because $Q$ is reversible, its spectrum is real. 

The matrix $D^{-1} L$ is clearly similar to the symmetric matrix $D^{-1/2} L D^{-1/2}$ and thus has a real spectrum, with all the eigenvalues nonnegative. Moreover, $D^{-1} L$ has an eigenvalue of zero as $D^{-1} L {\bf 1} =0 $.  As a consequence of these two observations, if we denote by  $r(P)$ the spectral gap of the matrix $Q$, then we have  
\begin{equation} \label{eq:rbound}  r(P) = \frac{1}{1-\lambda_2(Q)} = \frac{1}{\lambda_{n-1}(D^{-1} L)}, 
\end{equation} where $\lambda_{n-1}(D^{-1} L)$ is the second smallest eigenvalue of $D^{-1} L$.

\noindent {\bf Equivalence of norms on ${\bf 1}^\perp$.} We will need to pass between the $||\cdot||_D$ norm and the $||\cdot||_{\rm Dir}$ norm. To that end, we have the following lemma.

\begin{lemma}\label{lem:cn}
For any $x$ with $ \left \langle x,\bm{1} \right \rangle _{D} = 0 $, we have that \begin{equation*}
    \|x\|_{D}^2 \leq r(P) \|x\|_{{\rm Dir}}^2.
\end{equation*}
\end{lemma}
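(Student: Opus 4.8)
The plan is to work in the basis where the symmetric matrix $D^{-1/2} L D^{-1/2}$ is diagonalized, and to exploit the fact that the condition $\langle x, \mathbf{1}\rangle_D = 0$ says precisely that $D^{1/2}x$ is orthogonal to the zero eigenvector of $D^{-1/2} L D^{-1/2}$. Recall from the discussion preceding the lemma that $\|x\|_{\rm Dir}^2 = x^T L x$, and that $\|x\|_D^2 = x^T D x$. The idea is to substitute $y = D^{1/2} x$, so that $\|x\|_D^2 = y^T y = \|y\|_2^2$ and $\|x\|_{\rm Dir}^2 = y^T (D^{-1/2} L D^{-1/2}) y$. Then the inequality I want becomes $\|y\|_2^2 \le r(P)\, y^T (D^{-1/2} L D^{-1/2}) y$ for all $y$ orthogonal to the kernel of $M := D^{-1/2} L D^{-1/2}$.

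First I would verify that $D^{1/2}\mathbf{1}$ spans the kernel of $M$: we have $L\mathbf{1} = 0$ since the rows of $L$ sum to zero, hence $D^{-1} L \mathbf{1} = 0$, hence $M (D^{1/2}\mathbf{1}) = D^{-1/2} L D^{-1/2} D^{1/2} \mathbf{1} = D^{-1/2} L \mathbf{1} = 0$; and since $M$ is similar to $D^{-1} L$ whose kernel is one-dimensional (as the underlying graph is connected by Assumption~\ref{ass:mc}), the kernel of $M$ is exactly $\mathrm{span}(D^{1/2}\mathbf{1})$. Next, the condition $\langle x, \mathbf{1}\rangle_D = x^T D \mathbf{1} = 0$ rewrites as $(D^{1/2}x)^T (D^{1/2}\mathbf{1}) = 0$, i.e. $y \perp \ker M$. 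Now $M$ is symmetric positive semidefinite with smallest nonzero eigenvalue $\lambda_{n-1}(M) = \lambda_{n-1}(D^{-1}L)$, which by \eqref{eq:rbound} equals $1/r(P)$. By the Courant–Fischer variational characterization, for any $y$ orthogonal to $\ker M$ we have $y^T M y \ge \lambda_{n-1}(M)\, \|y\|_2^2 = \|y\|_2^2 / r(P)$. Rearranging gives $\|y\|_2^2 \le r(P)\, y^T M y$, which translates back to $\|x\|_D^2 \le r(P)\, \|x\|_{\rm Dir}^2$, as desired.

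The only mildly delicate point — and the one I would be most careful about — is the claim that the eigenvalues of $M$ below which we are applying Courant–Fischer are indexed correctly, i.e. that $\lambda_{n-1}(D^{-1}L)$ genuinely is the smallest \emph{positive} eigenvalue and not merely the second-smallest including a possible repeated zero. This is exactly where irreducibility/aperiodicity (Assumption~\ref{ass:mc}) enters: it guarantees the additive reversibilization $Q$ is an irreducible reversible chain, so its eigenvalue $1$ is simple, so $D^{-1}L = I - Q$ has a simple zero eigenvalue, so all other eigenvalues are strictly positive and $\lambda_{n-1}(D^{-1}L) > 0$. Everything else is a routine change of variables plus the min–max theorem, so the proof is short; I would present it essentially as the paragraph above, perhaps inlining the substitution $y = D^{1/2}x$ directly rather than belaboring the spectral bookkeeping.
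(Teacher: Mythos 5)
Your proposal is correct and follows essentially the same route as the paper: both reduce the claim to the Rayleigh-quotient bound $\min_{\langle x,\bm{1}\rangle_D=0} \|x\|_{\rm Dir}^2/\|x\|_D^2 = \lambda_{n-1}(D^{-1}L) = 1/r(P)$, the paper by noting that $D^{-1}L$ is self-adjoint in the $\langle\cdot,\cdot\rangle_D$ inner product and invoking Rayleigh--Ritz directly, and you by the equivalent symmetrization $y = D^{1/2}x$, $M = D^{-1/2}LD^{-1/2}$ followed by Courant--Fischer. Your explicit verification that the zero eigenvalue is simple is a point the paper's proof glosses over, but the argument is the same.
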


\begin{proof}

Indeed, 
\begin{equation*}
    \underset{\left \langle x,\bm{1} \right \rangle _{D} = 0}{\min}\frac{\|x\|_{{\rm Dir}}^2}{ \|x\|_{D}^2} = \underset{\left \langle x,\bm{1} \right \rangle _{D} = 0}{\min}\frac{x^T L x }{\left \langle x,x \right \rangle _{D}} = \underset{\left \langle x,\bm{1} \right \rangle _{D} = 0}{\min}\frac{\left \langle x,D^{-1}L x \right \rangle _{D} }{\left \langle x,x \right \rangle _{D}} .% = \lambda_{n-1} (D^{-1}L),
\end{equation*}

We next observe that the matrix $D^{-1} L$ is self adjoint in the $\langle \cdot, \cdot \rangle_D$ inner product: 
\[ \langle x, D^{-1} L y \rangle_D = x^T L y = \langle D^{-1} L x , y \rangle .\]  
Since the smallest eigenvalue of $D^{-1} L$ is zero with associated eigenvector of ${\bf 1}$, by the Rayleigh-Ritz theorem we have
\[ \underset{\left \langle x,\bm{1} \right \rangle _{D} = 0}{\min}\frac{\left \langle x,D^{-1}L x \right \rangle _{D} }{\left \langle x,x \right \rangle _{D}}  = \lambda_{n-1} (D^{-1}L). \] 
Putting it all together, we obtain 
$$\frac{\|x\|_{{\rm Dir}}^2}{ \|x\|_{D}^2} \geq \lambda_{n-1}(D^{-1} L) = r(P)^{-1},$$
where the last step used (\ref{eq:rbound}). This completes the proof. 
\end{proof}

\noindent {\bf Error in mean estimation.}  

Recall that we set $\hat{V}_{T}$ be an estimate for the mean of value function in Algorithm 1. Our next lemma upper bounds the error in the estimate $\hat{V}_{T}$. 

\begin{lemma}\label{lem:error_in_mean}
Suppose that $\hat{V}_{T}$ is generated by Algorithm 1 and $\bar{V}= \pi^T V$ denote the mean of value function. Let $t_0 = \max \left \{ t \in \mathbb{N}| t_0 \leq 2 \tau^{{\rm mix}}\left(\frac{1}{2(t_0+1)} \right) \right\}$. Then, for $t > t_0$, we have
\begin{equation*}
      E \left[ \left( \hat{V}_{t}-\bar{V}\right)^2 \right] \leq O\left(\frac{ r_{\rm max}^2 \tau^{{\rm mix}}\left( \frac{1}{2(t+1)} \right) }{(1-\gamma)^2 t}\right).
\end{equation*}{}
\end{lemma}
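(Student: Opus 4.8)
The plan is to bound the mean-square error of $\hat{V}_t = \bar{A}_t/(1-\gamma)$ by reducing it to the mean-square error of the running average $\bar{A}_t = \frac{1}{t+1}\sum_{k=0}^{t} r_k$ in estimating its target. Since $\hat{V}_t - \bar{V} = (\bar{A}_t - \pi^T R)/(1-\gamma)$ by the identity $\bar{V} = \pi^T R/(1-\gamma)$ from (\ref{eq:vbar}), we have $E[(\hat{V}_t - \bar{V})^2] = E[(\bar{A}_t - \pi^T R)^2]/(1-\gamma)^2$, so everything reduces to showing $E[(\bar{A}_t - \pi^T R)^2] = O\big(r_{\rm max}^2 \tau^{\rm mix}(1/(2(t+1)))/t\big)$. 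Here I would note that $r_k = r(s_k, s_{k+1})$ and, crucially, under Assumption \ref{ass:mix} the chain starts in stationarity, so $E[r_k] = \pi^T R$ for every $k$; thus $\bar{A}_t$ is an unbiased estimator and $E[(\bar{A}_t - \pi^T R)^2]$ is exactly its variance.

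First I would expand the variance of the average as a double sum of covariances:
\begin{equation*}
\mathrm{Var}(\bar{A}_t) = \frac{1}{(t+1)^2}\sum_{k=0}^{t}\sum_{\ell=0}^{t} \mathrm{Cov}(r_k, r_\ell).
\end{equation*}
Each term $|\mathrm{Cov}(r_k,r_\ell)|$ is at most $r_{\rm max}^2$ trivially, but for $|k-\ell|$ large it decays geometrically: since $|r_k|\le r_{\rm max}$ and the chain mixes as $d_{\rm TV}(P^n(s,\cdot),\pi)\le m\rho^n$, a standard coupling/mixing argument gives $|\mathrm{Cov}(r_k,r_\ell)| \le C r_{\rm max}^2 \rho^{|k-\ell|/2}$ or, phrased via the mixing time, $|\mathrm{Cov}(r_k,r_\ell)|$ drops below any $\epsilon r_{\rm max}^2$ once $|k-\ell| \ge \tau^{\rm mix}(\epsilon)$. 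Summing, the diagonal and near-diagonal band of width $O(\tau^{\rm mix}(\cdot))$ contributes $O((t+1)\tau^{\rm mix} r_{\rm max}^2)$ to the double sum, and the geometric tail contributes a lower-order amount. Dividing by $(t+1)^2$ yields $\mathrm{Var}(\bar{A}_t) = O(r_{\rm max}^2 \tau^{\rm mix}/t)$. Choosing the accuracy parameter $\epsilon = 1/(2(t+1))$ — which is why $t_0$ is defined via $t_0 \le 2\tau^{\rm mix}(1/(2(t_0+1)))$, i.e., for $t>t_0$ the band width $\tau^{\rm mix}(1/(2(t+1)))$ is genuinely smaller than $t$ so the bound is meaningful — gives exactly the claimed form $O\big(r_{\rm max}^2\tau^{\rm mix}(1/(2(t+1)))/((1-\gamma)^2 t)\big)$ after the $1/(1-\gamma)^2$ factor is reinstated.

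I expect the main obstacle to be the clean bookkeeping of the covariance decay: making precise the statement ``$|\mathrm{Cov}(r_k,r_\ell)|$ is small once $|k-\ell|\ge\tau^{\rm mix}(\epsilon)$'' requires relating total-variation mixing of the state to the covariance of the reward functionals (via $|E[f(X)g(Y)] - E[f(X)]E[g(Y)]| \le 2\|f\|_\infty\|g\|_\infty d_{\rm TV}$ applied conditionally on $s_k$, using $\|r\|_\infty \le r_{\rm max}$), and then carefully splitting the double sum into the near-diagonal band versus the geometric tail and verifying the tail is absorbed into the $O(\cdot)$ with the right $\tau^{\rm mix}$ argument. A secondary subtlety is handling the fact that $r_k$ depends on the \emph{pair} $(s_k, s_{k+1})$ rather than a single state, which shifts indices by one but does not change the decay rate; I would absorb this into the constant. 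Everything else — unbiasedness from stationarity, the algebraic reduction through $\bar{V} = \pi^T R/(1-\gamma)$, and the role of $t_0$ — is routine.
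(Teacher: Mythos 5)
Your proposal is correct, but it takes a genuinely different route from the paper. You exploit the closed form $\bar{A}_t = \frac{1}{t+1}\sum_{k=0}^{t} r_k$, note unbiasedness under the stationarity in Assumption \ref{ass:mix}, and bound $\mathrm{Var}(\bar{A}_t)$ by the classical covariance double sum with a near-diagonal band of width $\tau^{\rm mix}$ plus a geometric tail. The paper instead never expands $\bar{A}_t$: it writes the one-step recursion $\hat{V}_t = \hat{V}_{t-1} + \frac{1}{t+1}\bigl(\frac{r_t}{1-\gamma}-\hat{V}_{t-1}\bigr)$, squares it, controls the cross term $E\bigl[\bigl(\frac{r_t}{1-\gamma}-\bar{V}\bigr)\bigl(\hat{V}_{t-1}-\bar{V}\bigr)\bigr]$ by the ``shift back by $\tau_t$ and condition'' device (splitting $\hat{V}_{t-1}-\bar{V}$ into $\hat{V}_{t-1}-\hat{V}_{t-1-\tau_t}$ plus $\hat{V}_{t-1-\tau_t}-\bar{V}$ and using the mixing bound on the conditional law of $s_t$ given $\chi^{t-1-\tau_t}$), and then solves the resulting recursive inequality by an induction on $C_t/(t+1)$. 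Your argument is more elementary and self-contained, actually yields the bound for all $t$ (the restriction $t>t_0$ only makes it nontrivial), and makes transparent why the $\Omega(1/((1-\gamma)^2 t))$ rate is optimal; the paper's recursion mirrors the stochastic-approximation style used for the TD iterates themselves (it is the same conditioning trick as Lemma \ref{lemm:bh}) and adapts more directly to other averaging schemes. One point in your sketch that deserves explicit care: the geometric tail of the covariance sum naturally produces a factor $1/(1-\rho)$, which is not literally $O(\tau^{\rm mix})$ without an extra step --- either invoke submultiplicativity of the total-variation distance to get $\sum_{j\ge 0}\sup_s d_{\rm TV}(P^j(s,\cdot),\pi)\le 2\,\tau^{\rm mix}(1/4)$, or check that with $\epsilon = 1/(2(t+1))$ the tail contributes only $O\bigl(r_{\rm max}^2/((1-\rho)(t+1)^2)\bigr)$, which is absorbed. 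With that handled, your proof goes through.
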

\begin{proof}
By the definition of $\hat{V}_{t}$ and $\bar{A}_t$ given in Algorithm 1, we can write the recursion: 
\begin{equation*}
    \hat{V}_{t} = \frac{\bar{A}_t}{1-\gamma} = \frac{1}{1-\gamma}\left[ \bar{A}_{t-1} + \frac{1}{t+1} (r_t- \bar{A}_{t-1})\right] =  \hat{V}_{t-1} + \frac{1}{t+1} \left(\frac{r_t}{1-\gamma} - \hat{V}_{t-1}\right).
    \end{equation*}
We next use this recursion to argue:
\begin{align}
    E \left[ \left( \hat{V}_{t}-\bar{V}\right)^2 \right] = &  E \left[ \left(\hat{V}_{t-1} + \frac{1}{t+1} \left(\frac{r_t}{1-\gamma} - \hat{V}_{t-1}\right) -\bar{V} \right)^2 \right] \nonumber \\
     =&   E \left[ \left( \hat{V}_{t-1}-\bar{V}\right)^2  + \frac{1}{(t+1)^2} \left(\frac{r_t}{1-\gamma} - \hat{V}_{t-1}\right)^2 + \frac{2}{t+1} \left(\frac{r_t}{1-\gamma} - \hat{V}_{t-1} \right)\left( \hat{V}_{t-1}-\bar{V}\right) \right] \nonumber \\
     =&   E \left[ \left( \hat{V}_{t-1}-\bar{V}\right)^2  + \frac{1}{(t+1)^2}\left(\frac{r_t}{1-\gamma} - \hat{V}_{t-1}\right)^2+ \frac{2}{t+1} \left(\frac{r_t}{1-\gamma} -\bar{V}- \hat{V}_{t-1} +\bar{V} \right)\left( \hat{V}_{t-1}-\bar{V}\right) \right] \nonumber \\
     =&   E \left[ \left( 1- \frac{2}{t+1}\right)\left( \hat{V}_{t-1}-\bar{V}\right)^2  + \frac{1}{(t+1)^2}\left(\frac{r_t}{1-\gamma} - \hat{V}_{t-1}\right)^2+ \frac{2}{t+1} \left(\frac{r_t}{1-\gamma} -\bar{V} \right)\left( \hat{V}_{t-1}-\bar{V}\right) \right]. \label{eq:basicrecur}
\end{align}

To bound the second term on the right-hand side of (\ref{eq:basicrecur}), we will use that, since $r_{\rm max}$ is the upper bound on absolute values of the rewards, we have that
\begin{equation*}
    \left(\frac{r_t}{1-\gamma} - \hat{V}_{t-1}\right)^2 \leq \left( \frac{r_{\rm max}}{1-\gamma} + \frac{r_{\rm max}}{1-\gamma} \right)^2 = \frac{4r_{\rm max}^2}{(1-\gamma)^2}.
\end{equation*}

We next analyze the third term on the right-hand side of (\ref{eq:basicrecur}). 
Let $\tau_t = \tau^{{\rm mix}}\left(\frac{1}{2(t+1)}\right)$ so that for any state $s''$,  
\begin{equation} \label{eq:taubound}
    \sum_{s=1}^n |P^{\tau_t}(s'',s) -\pi_{s}| = 2 d_{\rm TV} (P^{\tau_t}(s'',\cdot), \pi) \leq 2 m \rho^{\tau_t} \leq \frac{1}{T+1}.
\end{equation}
 We have that
\begin{align*}
    E\left[\left(\frac{r_t}{1-\gamma} -\bar{V} \right)\left( \hat{V}_{t-1}-\bar{V}\right) \right] = & E\left[\left(\frac{r_t}{1-\gamma} -\bar{V} \right)\left( \hat{V}_{t-1}-\hat{V}_{t-1-\tau_t} +\hat{V}_{t-1-\tau_t} - \bar{V} \right) \right] \\
    = & E\left[\left(\frac{r_t}{1-\gamma} -\bar{V} \right)\left( \hat{V}_{t-1}-\hat{V}_{t-1-\tau_t}\right)\right] +E\left[\left(\frac{r_t}{1-\gamma} -\bar{V} \right)\left(\hat{V}_{t-1-\tau_t} - \bar{V} \right) \right]. \\
\end{align*}
We now bound each of the two terms in the last equation separately. For the first term, we have
\begin{align*}
    E\left[\left(\frac{r_t}{1-\gamma} -\bar{V} \right)\left( \hat{V}_{t-1}-\hat{V}_{t-1-\tau_t}\right)\right] \leq & \frac{2r_{\rm max}}{1-\gamma} \sum_{d=t-\tau_t}^{t-1} E \left[ |\hat{V}_{d} -\hat{V}_{d-1}| \right] \\  = &  \frac{2r_{\rm max}}{1-\gamma} \sum_{d=t-\tau_t}^{t-1} \frac{1}{d+1} E \left[  \left|\frac{r_d}{1-\gamma} -\hat{V}_{d-1} \right| \right]\\
    \leq & \frac{4r_{\rm max}^2}{(1-\gamma)^2} \sum_{d=t-\tau_t}^{t-1} \frac{1}{d+1} \\ \leq &  O \left( \frac{\tau_t r_{\rm max}^2}{(1-\gamma)^2(t+1)} \right).\\
\end{align*}
where the last inequality follows from $t > 2 \tau_t$ (which in turn follows from $t \geq t_0$). 

For the second term, we denote the following sigma algebra $\chi^{t}$ denote the sigma algebra generated by the information collected by time $t$, i.e., by the random variables $s_0, r_0, \theta_0, \cdots, s_t, r_t,\theta_t $. We then have that
\begin{align*}
    E\left[\left(\frac{r_t}{1-\gamma} -\bar{V} \right)\left(\hat{V}_{t-1-\tau_t} - \bar{V} \right) \right] = &  E\left[ E \left[\left(\frac{r_t}{1-\gamma} -\bar{V} \right)\left(\hat{V}_{t-1-\tau_t} - \bar{V} \right)| \chi^{t-1-\tau_t} \right] \right] \\
    = & E\left[ \sum_{s=1}^n \left(\frac{\sum_{s'=1}^n P(s,s') r(s, s') }{1-\gamma} -\bar{V} \right)\left(\hat{V}_{t-1-\tau_t} - \bar{V} \right)P^{\tau_t}(s_{t-1-\tau_t},s)  \right] \\
    = & E\left[ \sum_{s=1}^n \left( \frac{\sum_{s'=1}^n P(s,s') r(s, s')}{1-\gamma}- \bar{V}\right) \left(\hat{V}_{t-1-\tau_t} - \bar{V} \right) \left( P^{\tau_t}(s_{t-1-\tau_t},s) -\pi_{s} + \pi_{s} \right)  \right] \\ 
    = & E\left[ \sum_{s=1}^n \left( \frac{\sum_{s'=1}^n P(s,s') r(s, s')}{1-\gamma} - \bar{V} \right) \left(\hat{V}_{t-1-\tau_t} - \bar{V} \right) \left( P^{\tau_t}(s_{t-1-\tau_t},s) -\pi_{s} \right) \right] \\
    & + E\left[ \sum_{s=1}^n \left( \frac{\sum_{s'=1}^n  P(s,s') r(s, s')}{1-\gamma} - \bar{V} \right) \left(\hat{V}_{t-1-\tau_t} - \bar{V} \right) \pi_s \right]  \\
    = &  E\left[ \sum_{s=1}^n \left( \frac{\sum_{s'=1}^n P(s,s') r(s, s')}{1-\gamma} - \bar{V} \right) \left(\hat{V}_{t-1-\tau_t} - \bar{V} \right) \left( P^{\tau_t}(s_{t-1-\tau_t},s) -\pi_{s} \right) \right] \\
    & + E\left[ \left(\hat{V}_{t-1-\tau_t} - \bar{V} \right)  \right] \sum_{s=1}^n \left( \frac{\sum_{s'=1}^n  P(s,s') r(s, s')}{1-\gamma} - \bar{V} \right) \pi_s   \\
     = &  E\left[ \sum_{s=1}^n \left( \frac{\sum_{s'=1}^n P(s,s') r(s, s')}{1-\gamma} - \bar{V} \right) \left(\hat{V}_{t-1-\tau_t} - \bar{V} \right) \left( P^{\tau_t}(s_{t-1-\tau_t},s) -\pi_{s} \right) \right] \\
    & + 0  \\
    \leq & \frac{4 r_{\rm max}^2}{(1-\gamma)^2(t+1)} \\
    \leq & O\left(\frac{ r_{\rm max}^2}{(1-\gamma)^2(t+1)}\right).
\end{align*} Here the first equality follows by iterating conditional expectation; the second, third, fourth, and fifth equality is just rearranging terms; the sixth equality follows from (\ref{eq:vbar}); and the next inequality follows from  (\ref{eq:taubound}) as well as the fact that all rewards are upper bounded by $r_{\rm max}$ in absolute value.

Combining all the inequalities, we can conclude that as long as $t>t_0$, we have that 
\begin{equation*}
     E \left[ \left( \hat{V}_{t}-\bar{V}\right)^2 \right] \leq \left( 1- \frac{2}{t+1}\right) E \left[ \left( \hat{V}_{t-1}-\bar{V}\right)^2 \right] + O \left( \frac{\tau_t r_{\rm max}^2}{(1-\gamma)^2(t+1)^2} \right).
\end{equation*}Let $b_t = O \left( \frac{\tau_t r_{\rm max}^2}{(1-\gamma)^2} \right)$; then the above equation can be compactly written as  
\begin{equation*}
    E \left[ \left( \hat{V}_{t}-\bar{V}\right)^2 \right] \leq \left( 1- \frac{2}{t+1}\right) E \left[ \left( \hat{V}_{t-1}-\bar{V}\right)^2 \right] + \frac{b_t}{(t+1)^2}.
\end{equation*}

Let  $C_t=\max \left \{ (t_0 +1) \left( \hat{V}_{t_0}-\bar{V}\right)^2,b_t \right \}$. We will prove by induction that $t \geq t_0$, $$ E \left[ \left( \hat{V}_{t}-\bar{V}\right)^2 \right] \leq \frac{C_t}{t+1}.$$ Indeed, the assertion holds for $t=t_0$. Suppose that the assertion holds at time $t$, i.e., suppose that  $E \left[ \left( \hat{V}_{t}-\bar{V}\right)^2 \right] \leq C_t/(t+1)$. Then,
\begin{align*}
    E \left[ \left( \hat{V}_{t+1}-\bar{V}\right)^2 \right]  \leq & \left( 1- \frac{2}{t+2}\right) \frac{C_t}{t+1} + \frac{b_t}{(t+2)^2}\\
    = & \frac{C_{t+1}}{t+2} + \left( 1- \frac{2}{t+2}\right) \frac{C_t}{t+1} + \frac{b_t}{(t+2)^2} - \frac{C_{t+1}}{t+2}\\
    = & \frac{C_{t+1}}{t+2} + \frac{C_t (t+2)^2 - 2 C_t (t+2) +b_t(t+1) -C_{t+1} (t+1)(t+2)}{(t+1)(t+2)^2}\\
 = & \frac{C_{t+1}}{t+2} + \frac{\left(C_t -C_{t+1} \right) (t+1)(t+2) +(b_t-  C_t) (t+1) -  C_t}{(t+1)(t+2)^2}\\
 \leq & \frac{C_{t+1}}{t+2},
\end{align*}where the last inequality follows because $C_t \leq C_{t+1}$, $b_t \leq C_t$ and $C_t \geq 0$. Therefore, we have that, for $t \geq t_0$,
\begin{equation*}
    E \left[ \left( \hat{V}_{t}-\bar{V}\right)^2 \right] \leq \frac{C_t}{t+1}.
\end{equation*}{} Since $\left( \hat{V}_{t_0}-\bar{V}\right)^2 \leq 4 \frac{r_{\rm max}^2}{(1-\gamma)^2}$ with probability one, and by definition $t_0 \leq 2 \tau^{\rm max} \left( \frac{1}{2(t_0+1)} \right)$, we have that   $C_t = O \left(\frac{\tau_t r_{\rm max}^2}{(1-\gamma)^2} \right)$ for $t \geq t_0$; this completes the proof.
\end{proof}

With all these preliminary lemmas in place, we can now give the main result of this section, the proof of Corollary \ref{thm:3step}.

\begin{proof}[Proof of Corollary \ref{thm:3step}]
By the Pythagorean theorem, we have
\begin{equation}\label{eq:V_split}
    \|V'_T-V\|_{D}^2 = \|\pi^T V'_{T}\bm{1}-\pi^T V{\bm{1}}\|_{D}^2 + \|V'_{T,\bm{1}^{\perp}}-V_{\bm{1}^{\perp}}\|_{D}^2,
\end{equation}where $V'_{T,\bm{1}^{\perp}}$, $V_{\bm{1}^{\perp}}$ are the projections of $V'_T$, $V$ onto $\bm{1}^{\perp}$ in the $\left \langle \cdot,\cdot \right \rangle _{D}$ inner product. 

Recall, that, in  Algorithm 1, we defined   $$V'_T = V_{\bar{\theta}_T} + \left( \hat{V}_T - \pi^T V_{\bar{\theta}_T} \right) \bm{1}.$$ Therefore, 
\begin{align*}
    \pi^T V'_{T}\bm{1} & =\pi^T V_{\bar{\theta}_T} \bm{1} +\pi^T  \bm{1}\left( \hat{V}_T - \pi^T V_{\bar{\theta}_T} \right) {\bf 1}  \\ &  =\pi^T V_{\bar{\theta}_T} \bm{1} + \hat{V}_T \bm{1} - \pi^T V_{\bar{\theta}_T} \bm{1} \\ 
    & = \hat{V}_T \bm{1}.
\end{align*} Plugging this as well as $\bar{V}= \pi^T V$ into (\ref{eq:V_split}) we obtain:
\begin{equation} \label{eq:ortho}
     \|V'_T-V\|_{D}^2 
=   \|\hat{V}_{T}\bm{1}-\bar{V} \bm{1}\|_{D}^2 + \|V'_{T,\bm{1}^{\perp}}-V_{\bm{1}^{\perp}}\|_{D}^2.
\end{equation}
For the first term on the right hand side of (\ref{eq:ortho}), by the definition of the square norm under $\pi$, it is immediate that
\begin{equation*}
    \| \hat{V}_{T}\bm{1}-\bar{V} \bm{1}\|_{D}^2 = \sum_{i=1}^n \pi_i (\hat{V}_{T}-\bar{V})^2 = \left( \hat{V}_{T}-\bar{V}\right)^2.
\end{equation*}For the second term on the right hand side of (\ref{eq:ortho}), we have
\begin{align*}
    \|V'_{T,\bm{1}^{\perp}}-V_{\bm{1}^{\perp}}\|_{D}^2 = & \| V'_{T, {\bf 1}^{\perp}}  - V_{{\theta}^*,\bm{1}^{\perp}} + V_{{\theta}^*,\bm{1}^{\perp}} - V_{\bm{1}^{\perp}} \|_D^2 \\ 
\leq & 2 \|V'_{T,\bm{1}^{\perp}}-V_{{\theta}^*,\bm{1}^{\perp}}\|_{D}^2+ 2 \| V_{{\theta}^*,\bm{1}^{\perp}} -V_{\bm{1}^{\perp}}\|_{D}^2 \\
\leq & 2 r(P) \|V'_{T,\bm{1}^{\perp}}-V_{{\theta}^*,\bm{1}^{\perp}}\|_{{\rm Dir}}^2 + 2 \| V_{{\theta}^*} -V\|_{D}^2 \\
= & 2 r(P) \|V_{\bar{\theta}_T}-V_{{\theta}^*}\|_{{\rm Dir}}^2 + 2 \| V_{{\theta}^*} -V\|_{D}^2 ,
\end{align*} where the third line follows by the Lemma \ref{lem:cn} and the Pythagorean theorem and the fourth line comes from the observation that $\| \cdot\|_{{\rm Dir}}$ does not change when
we add a multiple of $\bm{1}$.

Combining these results and taking expectation of (\ref{eq:ortho}), we obtain
\begin{align}
     E \left[  \|V'_T-V\|_{D}^2 \right]  
\leq & E \left[ ( \hat{V}_{T}-\bar{V})^2 \right] + 2 r(P) E \left[ \|V_{\bar{\theta}_T}-V_{{\theta}^*}\|_{{\rm Dir}}^2\right] + 2 E \left[\| V_{{\theta}^*} -V\|_{D}^2\right] \nonumber \\
\leq &  O \left( \frac{\tau^{\rm mix} \left( \frac{1}{2(T+1)}\right) r _{\rm max}^2}{(1-\gamma)^2T} \right) +  r(P) \cdot \frac{  \| \theta^*-\theta_{0}\|_2^2+ \left(9+12 \tau ^{\rm mix}\left(1/\sqrt{T}\right)\right) G^2  }{\gamma \sqrt{T}} + 2 \| V_{{\theta}^*} -V\|_{D}^2, \label{eq:fistbound}
\end{align}
where the second inequality follows by Lemma \ref{lem:error_in_mean} and Eq. (\ref{eq:dir}) from the main text.

On the other hand,
\begin{small}
\begin{align}
      E \left[  \|V'_T-V\|_{D}^2 \right] 
= &  E \left[  \|V_{\bar{\theta}_T}-V\|_{D}^2 \right] +  E \left[  \left\| \left( \hat{V}_T - \pi^T V_{\bar{\theta}_T} \right) \bm{1} \right\|_{D}^2 \right] - 2 E \left[ \left(\pi^T V-\pi^T V_{\bar{\theta}_T}\right) \left( \hat{V}_T - \pi^T V_{\bar{\theta}_T} \right)\right] \nonumber  \\
= &  E \left[  \|V_{\bar{\theta}_T}-V\|_{D}^2 \right] +  E \left[  \left\| \left( \hat{V}_T - \pi^T V_{\bar{\theta}_T} \right) \bm{1} \right\|_{D}^2 \right] - 2 E \left[ \left(\bar{V}-\pi^T V_{\bar{\theta}_T}\right) \left( \hat{V}_T - \pi^T V_{\bar{\theta}_T} \right)\right]  \nonumber \\
= & E \left[  \|V_{\bar{\theta}_T}-V\|_{D}^2 \right] +   E \left[   \left( \hat{V}_T - \pi^T V_{\bar{\theta}_T} \right)^2 \right] - 2 E \left[ \left(\hat{V}_T-\pi^T V_{\bar{\theta}_T  }+\bar{V}-\hat{V}_T\right) \left( \hat{V}_T - \pi^T V_{\bar{\theta}_T} \right)\right] \nonumber \\
= & E \left[  \|V_{\bar{\theta}_T}-V\|_{D}^2 \right] -   E \left[   \left( \hat{V}_T - \pi^T V_{\bar{\theta}_T} \right)^2 \right]+ 2 E \left[ \left(\hat{V}_T - \bar{V}\right) \left( \hat{V}_T - \pi^T V_{\bar{\theta}_T} \right)\right] \nonumber \\
\leq & E \left[  \|V_{\bar{\theta}_T}-V\|_{D}^2 \right] -   E \left[   \left( \hat{V}_T - \pi^T V_{\bar{\theta}_T} \right)^2 \right]+ E \left[ \left(\hat{V}_T - \bar{V}\right)^2 + \left( \hat{V}_T - \pi^T V_{\bar{\theta}_T} \right)^2\right] \nonumber \\
= & E \left[  \|V_{\bar{\theta}_T}-V\|_{D}^2 \right] + E \left[ \left(\hat{V}_T - \bar{V}\right)^2\right] \nonumber\\
\leq & 2 E \left[\|V_{\bar{\theta}_T}-V_{\theta^*}\|_{D}^2 \right]+ 2 E \left[\|V_{{\theta}^*}-V\|_{D}^2\right] + E \left[ \left(\hat{V}_T - \bar{V}\right)^2\right] \nonumber \\
\leq & \frac{ 2 \left [ \| \theta^*-\theta_{0}\|_2^2+ \left(9+12 \tau ^{\rm mix}\left(1/\sqrt{T}\right)\right) G^2  \right ]}{(1-\gamma) \sqrt{T}}+ 2 E \left[  \|V_{{\theta}^*}-V\|_{D}^2\right] + O\left(\frac{ r_{\rm max}^2 \tau^{{\rm mix}}\left( \frac{1}{2(T+1)} \right) }{(1-\gamma)^2 T}\right). \label{eq:secondbound} 
\end{align}
\end{small}

Here the first four equalities come from rearranging; the next inequality comes from the identity $2ab \leq a^2+b^2$; the next equality comes from cancellation; the next inequality uses $||u+v||_D^2 \leq 2 ||u||_D^2 + 2 ||v||_D^2$; and the final inequality uses Corollary \ref{thm:bound} and Lemma \ref{lem:error_in_mean}.

We have just derived two bounds on $E[||V_T'-V||_D^2]$, one in Eq. (\ref{eq:fistbound}) and one in Eq. (\ref{eq:secondbound}). We could, of course, take the minimum of these two bounds. We then obtain:
\begin{align*}
E \left[  \|V'_T-V\|_{D}^2 \right] \leq & 2  \|V_{{\theta}^*}-V\|_{D}^2 + O \left( \frac{\tau^{\rm mix} \left( \frac{1}{2(T+1)}\right) r _{\rm max}^2}{(1-\gamma)^2T} \right) \\
&+  \min \left \{  r(P) \cdot \frac{  \| \theta^*-\theta_{0}\|_2^2+ \left(9+12 \tau ^{\rm mix}\left(1/\sqrt{T}\right)\right) G^2 }{\gamma \sqrt{T}}, \right.\\
& \quad \quad \quad \quad \left. \frac{ 2 \| \theta^*-\theta_{0}\|_2^2+ 2\left(9+12 \tau ^{\rm mix}\left(1/\sqrt{T}\right)\right) G^2  }{(1-\gamma) \sqrt{T}}  \right\}. 
\end{align*}Therefore, 
\begin{small}
\begin{equation*}
     E \left[  \|V'_T-V\|_{D}^2 \right]
\leq   2   \|V_{{\theta}^*}-V\|_{D}^2 +  O \left(   \frac{\tau^{\rm mix} \left( \frac{1}{T+1}\right) r _{\rm max}^2}{(1-\gamma)^2T} \right)
 +   \frac{  \| \theta^*-\theta_{0}\|_2^2 + G^2\left[1+\tau ^{\rm mix}(1/\sqrt{T})\right] }{\sqrt{T}}   \cdot \min \left \{ \frac{r(P)}{\gamma} , \frac{2}{1-\gamma}  \right \} ,
\end{equation*}
\end{small}and the proof is  complete. 
\end{proof}
\section{Error Bound for TD with Eligibility Traces} 

We now analyze the performance of projected  TD($\lambda$) which updates as % At time $t$, it updates the eligibility trace vector as: $$z_t = (\gamma \lambda) z_{t-1} + \phi(s_t),$$ and then take a projected update step:
\begin{equation}\label{eq:pro_td_lam}
    \theta_{t+1} = {\rm Proj}_{\Theta_{\lambda}}(\theta_t + \alpha_t \delta_t \hat{z}_t),
\end{equation}  where we now use  
$$z_t = \sum_{k=0}^t (\gamma \lambda)^k \phi(s_{t-k}).$$ We remark that this is an abuse of notation, as previously $z_t$ was defined with the sum starting at negative infinity, rather than zero; however, in this section, we will assume that the sum starts at zero. The consequence of this modification of notation is that Theorem \ref{thm:bound_lambda} does not imply that $-E[z_t]$ is the gradient splitting of an appropriately defined function anymore, as now one needs to account for the error term coming from the beginning of the sum. 

We assume $\Theta_{\lambda}$ is a convex set containing the optimal solution $\theta_{\lambda}^*$. We will further assume that the norm of every element in $\Theta_{\lambda}$ is at most $R_{\lambda}$. Recall that

We begin by introducing some notation. Much of our analysis follows \cite{bhandari2018finite} with some deviations where we appeal to Theorem \ref{thm:bound_lambda}, and the notation bellow is mostly identical to what is used in that paper. First, recall that  we denote the quantity $\delta_t z_t$ by $x(\theta_t,z_t)$. We define $\zeta_t(\theta,z_t)$ as a random variable which can be thought of as a measure of the bias that TD($\lambda$) has in estimation of the gradient: $$\zeta_t(\theta,z_t) = (\bar{x}(\theta) - \delta_tz_t)^T(\theta_{\lambda}^*-\theta).$$ Analogously to the TD(0) case, what turns out to matter for our analysis is not so much the bias per se, but the inner product of the bias with the direction of the optimal solution as in the definition of $\zeta_t(\theta, z_t)$. 

We will next need an upper bound on how big $||x(\theta, z_t)||_2$ can get. Since under Assumption  \ref{ass:features}, we have that $||\phi(s)||_2 \leq 1$ for all $s$, we have that 
\[ \|z_t\|_2 \leq \frac{1}{1-\gamma \lambda}. \]   Furthermore, we have that  $$|\delta_t| = \left|r(s,s')+\gamma \phi(s')^T \theta_t -\phi(s)^T \theta_t \right| \leq r_{{\rm max}} +2R_{\lambda},$$ where we used $|r(s,s')| \leq r_{{\rm max}}$ as well as Cauchy-Schwarz. Putting the last two equations together, we obtain  
\begin{equation}\label{eq:gbound_lambda}
    \|x(\theta,z_t)\|_2 \leq\frac{r_{{\rm max}} +2R_{\lambda}}{1-\gamma \lambda} :=  G_{\lambda}.
\end{equation}
Compared to the result for TD($0$), the bound depends on a slightly different definition of the mixing time that takes into account the geometric weighting in the eligibility trace. Define $$\tau_{\lambda}^{{\rm mix}}(\epsilon) = \max \{ \tau^{{\rm MC}}(\epsilon), \tau^{{\rm Algo}}(\epsilon) \},$$where  
\begin{align*}\tau^{{\rm MC}}(\epsilon) & = \min \left \{t \in \mathbb{N}_0 |  m \rho^t \leq \epsilon \right\} \\ \tau^{{\rm Algo}}(\epsilon) & = \min \left \{t \in \mathbb{N}_0 | (\gamma \lambda) ^t \leq \epsilon \right\}
\end{align*}

The main result of this section is the following corollary of Theorem \ref{thm:lambda_iden}.

\begin{corollary}\label{thm:bound_lambda}
   Suppose Assumptions \ref{ass:mc}-\ref{ass:features} hold. Suppose further that $(\theta_t)_{t \geq 0}$ is generated by the  Projected TD($\lambda$) algorithm of (\ref{eq:pro_td_lam}) with  $\theta_{\lambda}^*$ belonging to the convex set $\Theta_{\lambda}$ and step-sizes $\alpha_0 = \dots = \alpha_T = 1/\sqrt{T}$. Then
 $$E \left[ f^{(\lambda)}(\theta) \right] 
     \leq  \frac{ \| \theta_{\lambda}^*-\theta_{0}\|_2^2 +G_{\lambda}^2\left[14 + 28\tau_{\lambda} ^{{\rm mix}}\left(1/\sqrt{T}\right) \right]}{2\sqrt{T}},$$ where the function $f^{(\lambda)}(\theta)$ was  defined in Theorem \ref{thm:bound}.
\end{corollary}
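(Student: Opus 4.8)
The plan is to transplant the proof of Corollary~\ref{thm:bound} for TD(0) almost verbatim, with Theorem~\ref{thm:lambda_iden} playing the role that Theorem~\ref{thm:split} played there (and reading the left-hand side as $E[f^{(\lambda)}(\bar\theta_T)]$, the error at the running-average iterate). The first step is to record the eligibility-trace analogue of Corollary~\ref{cor:identity}: since Theorem~\ref{thm:lambda_iden} says $-\bar{x}(\theta)$ is a gradient splitting of the convex quadratic $f^{(\lambda)}$, Proposition~\ref{prop:splitmotivation} applied with $\theta_1=\theta_\lambda^*$ --- using $\bar{x}(\theta_\lambda^*)=0$ from (\ref{eq:xbar^*}) and $\nabla f^{(\lambda)}(\theta_\lambda^*)=0$ --- together with the identity $(a-\theta)^T(\nabla q(a)-\nabla q(\theta))=2q(\theta)$ valid for any quadratic $q$, yields $(\theta_\lambda^*-\theta)^T\bar{x}(\theta)=f^{(\lambda)}(\theta)$ for all $\theta$.

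Next I would run the one-step drift argument. From (\ref{eq:pro_td_lam}), non-expansiveness of the Euclidean projection onto $\Theta_\lambda$, and the bound $\|x(\theta_t,z_t)\|_2\le G_\lambda$ of (\ref{eq:gbound_lambda}),
\[ \|\theta_\lambda^*-\theta_{t+1}\|_2^2 \le \|\theta_\lambda^*-\theta_t\|_2^2 - 2\alpha_t\, x(\theta_t,z_t)^T(\theta_\lambda^*-\theta_t) + \alpha_t^2 G_\lambda^2. \]
Decomposing $x(\theta_t,z_t)^T(\theta_\lambda^*-\theta_t) = \bar{x}(\theta_t)^T(\theta_\lambda^*-\theta_t) - \zeta_t(\theta_t,z_t) = f^{(\lambda)}(\theta_t) - \zeta_t(\theta_t,z_t)$ by the displayed identity and the definition of $\zeta_t(\theta,z_t)$, then summing over $t=0,\dots,T-1$, taking expectations, telescoping, discarding $E[\|\theta_\lambda^*-\theta_T\|_2^2]\ge0$, substituting $\alpha_t=1/\sqrt{T}$ (so $\sum_t\alpha_t^2 G_\lambda^2 = G_\lambda^2$), and finally using convexity of $f^{(\lambda)}$ to pass from $\frac1T\sum_t E[f^{(\lambda)}(\theta_t)]$ to $E[f^{(\lambda)}(\bar\theta_T)]$, reduces the entire statement to an upper bound on $\sum_t E[\zeta_t(\theta_t,z_t)]$.

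That $\zeta_t$ bound --- the TD($\lambda$) analogue of Lemma~\ref{lemm:bh} --- is the main obstacle. I expect it to take the form $E[\zeta_t(\theta_t,z_t)] \le G_\lambda^2\bigl(c_1+c_2\,\tau_\lambda^{{\rm mix}}(\alpha_T)\bigr)\alpha_{t^*}$ with $t^*=\max\{0,t-\tau_\lambda^{{\rm mix}}(\alpha_T)\}$, which after summation at $\alpha_t=1/\sqrt T$ gives $\sqrt{T}\,G_\lambda^2(c_1+c_2\tau_\lambda^{{\rm mix}}(1/\sqrt{T}))$ and, combined with the $G_\lambda^2$ term, produces exactly the constants $14+28\tau_\lambda^{{\rm mix}}$ after division by $2\sqrt{T}$. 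Proving it needs two ingredients beyond the TD(0) case: (i) because the trace $z_t$ depends on the whole trajectory, the look-back window must be long enough both for the chain to approach stationarity (the $\tau^{{\rm MC}}$ term) and for the geometric weights $(\gamma\lambda)^k$ to render the tail of $z_t$ negligible (the $\tau^{{\rm Algo}}$ term) --- which is precisely why $\tau_\lambda^{{\rm mix}}=\max\{\tau^{{\rm MC}},\tau^{{\rm Algo}}\}$; and (ii) because $\bar{x}(\theta)$ is the steady-state mean of $\delta_t z_t$ for the \emph{infinite} eligibility trace while the algorithm runs the trace truncated at $k=0$, an extra $O\bigl((\gamma\lambda)^{t+1}/(1-\gamma\lambda)\bigr)$ truncation error must be absorbed (again governed by $\tau^{{\rm Algo}}$). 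I would establish the bound by following Lemma~11 of \citet{bhandari2018finite} adapted to traces: condition on the $\sigma$-algebra $\chi^{t^*}$, use $\|x(\theta_t,z_t)\|_2\le G_\lambda$ and $\|z_t\|_2\le 1/(1-\gamma\lambda)$ to control how far $\theta_t$ and $z_t$ can have drifted since time $t^*$, and bound the residual correlation between $\delta_t z_t$ and $\theta_\lambda^*-\theta_{t^*}$ via the total-variation mixing bound of Assumption~\ref{ass:mix} plus the trace-truncation estimate; the remaining algebra is routine bookkeeping of constants.
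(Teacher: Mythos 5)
Your proposal is correct and follows essentially the same route as the paper: the same splitting identity $(\theta_\lambda^*-\theta)^T\bar{x}(\theta)=f^{(\lambda)}(\theta)$ via Proposition \ref{prop:splitmotivation} and (\ref{eq:xbar^*}), the same projected drift recursion with $G_\lambda$, telescoping, and convexity, reducing everything to a bound on $\sum_t E[\zeta_t(\theta_t,z_t)]$. The only difference is that where you sketch re-deriving that bias bound by adapting Lemma 11 of \citet{bhandari2018finite} to traces (correctly identifying the two ingredients, chain mixing and trace truncation, behind $\tau_\lambda^{\rm mix}=\max\{\tau^{\rm MC},\tau^{\rm Algo}\}$), the paper simply invokes Lemma 20 of that reference, which already provides exactly this estimate.
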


\begin{proof}
 We begin with the standard recursion for the distance to the limit:
\begin{small}
\begin{align*}
\| \theta_{\lambda}^*-\theta_{t+1}\|_2^2 
& =    \| \theta_{\lambda}^*-{\rm Proj}_{{\Theta}_{\lambda}}(\theta_t+\alpha_t \delta_t z_t) \|_2^2  \\
& \leq \| \theta_{\lambda}^*-\theta_t - \alpha_t \delta_t z_t \|_2^2 \\
&=     \| \theta_{\lambda}^*-\theta_{t}\|_2^2 - 2 \alpha_t \delta_t z_t^T(\theta_{\lambda}^*-\theta_{t}) + \alpha_t^2 \|\delta_t z_t\|^2_2 \\
&=     \| \theta_{\lambda}^*-\theta_{t}\|_2^2 - 2 \alpha_t \left(\bar{x}(\theta_t)^T -\left(\bar{x}(\theta_t)^T-\delta_t z_t^T\right)\right)(\theta_{\lambda}^*-\theta_{t}) + \alpha_t^2 \|\delta_t z_t\|^2_2 \\
& = \| \theta_{\lambda}^*-\theta_{t}\|_2^2 - 2 \alpha_t \left(\bar{x}(\theta_t) -\bar{x}(\theta_{\lambda}^*)  \right)^T(\theta_{\lambda}^*-\theta_{t}) + 2 \alpha_t \zeta_t (\theta_t,z_t) + \alpha_t^2  \|x(\theta_t,z_t)\|^2_2 \\
& = \| \theta_{\lambda}^*-\theta_{t}\|_2^2 - 2 \alpha_t \left[  (1-\gamma \kappa)\|V_{\theta}-V_{\theta_{\lambda}^*}\|_{D}^2 + (1-\lambda) \sum_{m=0}^{\infty} \lambda^m \gamma^{m+1} \|V_{\theta}-V_{\theta_{\lambda}^*}\|_{{\rm Dir},m+1}^2 \right] \\
 & \quad + 2 \alpha_t \zeta_t (\theta_t,z_t) + \alpha_t^2 \|x(\theta_t,z_t)\|^2_2 \\
& \leq \| \theta_{\lambda}^*-\theta_{t}\|_2^2 - 2 \alpha_t \left[  (1-\gamma \kappa)\|V_{\theta}-V_{\theta_{\lambda}^*}\|_{D}^2 + (1-\lambda) \sum_{m=0}^{\infty} \lambda^m \gamma^{m+1} \|V_{\theta}-V_{\theta_{\lambda}^*}\|_{{\rm Dir},m+1}^2 \right] \\
 & \quad + 2 \alpha_t \zeta_t (\theta_t,z_t) + \alpha_t^2  G_{\lambda}^2.
\end{align*}
\end{small}In the  sequence of equations above the first inequality follows that the projection onto a convex set does not increase distance; the remaining equalities are rearrangements, using the quantity $\bar{x}(\theta)$ defined in (\ref{eq:xbar}),  that $\bar{x}(\theta_{\lambda}^*) =0$ from (\ref{eq:xbar^*}), and Proposition \ref{prop:splitmotivation}; and the final inequality used (\ref{eq:gbound_lambda}).  

We next take expectations, rearrange terms, and sum: 
\begin{align*}
     &\sum_{t=0}^{T-1} 2 \alpha_t E \left [(1-\gamma \kappa)\|V_{\theta}-V_{\theta_{\lambda}^*}\|_{D}^2 + (1-\lambda) \sum_{m=0}^{\infty} \lambda^m \gamma^{m+1} \|V_{\theta}-V_{\theta_{\lambda}^*}\|_{{\rm Dir},m+1}^2 \right ] \\
\leq & \sum_{t=0}^{T-1} \left( E[\| \theta_{\lambda}^*-\theta_{t}\|_2^2] - E[\| \theta_{\lambda}^*-\theta_{t+1}\|_2^2] \right) + \sum_{t=0}^{T-1}  2 \alpha_t E[\zeta_t (\theta_t,z_t)] + \sum_{t=0}^{T-1} \alpha_t^2 G_{\lambda}^2\\
=    & \left( \| \theta_{\lambda}^*-\theta_{0}\|_2^2 - E[\| \theta_{\lambda}^*-\theta_{T}\|_2^2] \right) + \sum_{t=0}^{T-1}  2 \alpha_t E[\zeta_t (\theta_t,z_t)] + \sum_{t=0}^{T-1} \alpha_t^2 G_{\lambda}^2\\
\leq & \| \theta_{\lambda}^*-\theta_{0}\|_2^2+ \sum_{t=0}^{T-1}  2 \alpha_t E[\zeta_t (\theta_t,z_t)] + \sum_{t=0}^{T-1} \alpha_t^2 G_{\lambda}^2.
\end{align*}
Plugging in the step-sizes $\alpha_0 = \dots = \alpha_T = 1/\sqrt{T}$, we obtain
\begin{align*}
     &\sum_{t=0}^{T-1} E \left [(1-\gamma \kappa)\|V_{\theta}-V_{\theta_{\lambda}^*}\|_{D}^2 + (1-\lambda) \sum_{m=0}^{\infty} \lambda^m \gamma^{m+1} \|V_{\theta}-V_{\theta_{\lambda}^*}\|_{{\rm Dir},m+1}^2 \right ] \\
\leq & \frac{\sqrt{T}}{2} \left( \| \theta_{\lambda}^*-\theta_{0}\|_2^2 +G_{\lambda}^2 \right) + \sum_{t=0}^{T-1}  E[\zeta_t (\theta_t,z_t)].
\end{align*}
Using Lemma 20 in \citep{bhandari2018finite}, we have that 
\begin{align*}
    \sum_{t=0}^{T-1}  E[\zeta_t (\theta_t,z_t)] & \leq 6 \sqrt{T} \left (1+ 2 \tau_{\lambda} ^{\rm mix}(\alpha_T)\right)G_{\lambda}^2 + \sum_{t=0}^{2\tau_{\lambda} ^{\rm mix}(\alpha_T)}(\gamma \lambda)^t G_{\lambda}^2 \\
    & \leq 6\sqrt{T}  \left(1+ 2 \tau_{\lambda} ^{\rm mix}(\alpha_T)\right)G_{\lambda}^2 + \left(2\tau_{\lambda} ^{\rm mix}(\alpha_T) + 1\right) G_{\lambda}^2.
\end{align*} Combining with convexity, we get
\begin{align*}
     & E \left[(1-\gamma \kappa)\|V_{\theta_{\lambda}^*}-V_{\bar{\theta}_T}\|_{D}^2 + (1-\lambda) \sum_{m=0}^{\infty} \lambda^m \gamma^{m+1} \|V_{\theta_{\lambda}^*}-V_{\bar{\theta}_T}\|_{{\rm Dir},m+1}^2 \right]\\
\leq & E \left [ \frac{1}{T} \sum_{t=0}^{T-1} (1-\gamma \kappa)\|V_{\theta}-V_{\theta_{\lambda}^*}\|_{D}^2 + (1-\lambda) \sum_{m=0}^{\infty} \lambda^m \gamma^{m+1} \|V_{\theta}-V_{\theta_{\lambda}^*}\|_{{\rm Dir},m+1}^2 \right ]\\
\leq & \frac{  \| \theta_{\lambda}^*-\theta_{0}\|_2^2 +G_{\lambda}^2   }{2\sqrt{T}}  + \frac{6 \sqrt{T} \left(1+ 2 \tau_{\lambda} ^{\rm mix}(\alpha_T)\right)G_{\lambda}^2 + \left(2\tau_{\lambda} ^{\rm mix}(\alpha_T) + 1\right) G_{\lambda}^2}{T}\\
\leq    & \frac{ \| \theta_{\lambda}^*-\theta_{0}\|_2^2 + G_{\lambda}^2\left(14 + 28\tau ^{\rm mix}\left(1/\sqrt{T}\right) \right)  }{2\sqrt{T}}.
\end{align*}
\end{proof}

\end{document}